\documentclass[letterpaper]{article} 
\usepackage{aaai2027}                
\usepackage[hyphens]{url}
\usepackage{graphicx}
\usepackage{natbib}
\usepackage{caption}
\usepackage{booktabs}
\usepackage{amsmath}
\usepackage{comment}
\usepackage{amssymb}
\newtheorem{proposition}{Proposition}
\newtheorem{corollary}{Corollary}

\title{HKVLM: Faithful Query--Region Binding for Frozen-Detector Visual Grounding}

\author{Bo Ma}
\affiliations{
    Auckland University of Technology
}

\begin{document}
\maketitle

\begin{abstract}
Visual grounding often fails even when the target object is present in the proposal pool, because the language-side referent is bound to the wrong region. We study this binding failure under frozen perception and ask whether an explicit query--region alignment hook, together with a perception-grounded abstention mechanism, can improve faithful grounding without retraining the detector or the vision-language backbone. HKVLM freezes a language-aligned open-vocabulary detector for localization and learns a lightweight hook that maps referential query embeddings to detector proposals in a shared space; a verifier abstains when no region sufficiently supports the query. We prove an exact proposal-level diagnostic decomposition, $(1-\mathrm{SeeErr})(1-\mathrm{SayErr})$, separating proposal-coverage failures from conditional binding failures, and a monotonicity result that characterizes the faithfulness--recall trade-off induced by abstention.

Across RefCOCO, RefCOCO+, RefCOCOg, and POPE, HKVLM improves over untrained and trained matched-perception binding controls and substantially reduces hallucination through abstention. Strong coordinate-decoding and end-to-end fine-tuned baselines remain much higher in raw grounding accuracy, and a reasoning-stress set exposes binding as the main current bottleneck. We therefore present HKVLM as a diagnostic and mechanism-level study of query--region binding under frozen perception, not as an absolute localization leader.
\end{abstract}

\section{Introduction}
Practitioners who adapt small open-source VLMs for object detection report a
stubborn pattern: out of the box the models localize poorly, and modest
fine-tuning---for example low-rank adaptation on a thousand images---yields
only marginal gains. The models clearly attend to the right pixels yet still
return wrong labels or imprecise boxes: they see correctly but mis-speak.

The gap is especially visible in binding-heavy visual grounding: requests phrased as intent
or world knowledge rather than a category name---``something to cut paper with'',
``the unoccupied chair'', ``the person not wearing a helmet''---where a pure
detector has no phrase to match and the system must connect language intent to
the correct region. This is the regime we study, rather than closed-set
detection, where specialized detectors already excel.

A common explanation blames the visual encoder and proposes a stronger one.
Encoder upgrades help only marginally in practice, which points elsewhere. In
a VLM the encoder output is consumed by the language model exactly as text is:
keys and values are read according to a query, and the quality of that query
is not guaranteed. When the box itself is produced by the autoregressive
decoder---as a sequence of coordinate tokens \citep{kosmos2,shikra,griffon} or
quantized coordinate symbols \citep{qwen25vl,rexomni}---the geometry of the
object is forced through the same brittle channel that produces words. A single
bad query token can override a correct percept. The localization signal and the
language signal are entangled.

We take the position that faithful grounding requires \emph{three} separable
competencies: \textbf{localization competence} (place a tight box on an
object), \textbf{referential faithfulness} (name only what is actually there),
and \textbf{binding} (connect the user's intent to the correct region).
Coordinate-as-text designs entangle all three inside one autoregressive head.
Two-stage reasoning pipelines \citep{detgpt,lisa} separate localization but
re-couple through a lossy text interface: the model's intent is compressed into
a class string handed to an external detector, discarding instance-specific and
relational cues. Recent decoupled designs recast box regression as retrieval
over proposals \citep{chatrex,groma}, a real step forward, but selection still
happens inside the autoregressive language model via index tokens and there is
no explicit objective that ties language semantics to region geometry or that
penalizes ungrounded assertions.

We propose HKVLM, which keeps localization entirely outside the language path.
A frozen, language-aligned open-vocabulary detector
\citep{groundingdino,glip} produces class-agnostic, high-recall region proposals
together with vision-language-fused region embeddings. The language model reads
the image and the reasoning instruction and emits one or more \emph{referential
query embeddings} that carry its world knowledge. A small trained module---the \emph{alignment hook}---projects
queries and regions into a shared space and binds them by contrastive retrieval
and bipartite assignment, reusing the set-prediction machinery of DETR
\citep{detr}. Categories and answers are produced in language, but every box
comes from the detector and every emitted category must be justified by an
aligned region whose embedding agrees. This last requirement is a
\emph{perception-grounded faithfulness} constraint, and it is motivated by
evidence that grounding alone does not automatically reduce hallucination
\citep{objgndhalluc}.

Freezing the heavy components has a practical payoff. Only the hook and a thin
projection are trained, so the cold-start regime that defeats parameter-efficient
tuning of a monolithic VLM becomes tractable: the localization stack already
works, and learning is confined to the binding.

\paragraph{What this paper claims and what it does not claim.}
We do not claim state-of-the-art absolute localization. We ask whether, given a fixed frozen detector, explicit query--region binding and a perception-grounded abstention mechanism reduce conditional binding errors and unsupported assertions. HKVLM is therefore a matched-perception mechanism study.

Our contributions are methodological, theoretical, and diagnostic:
\begin{enumerate}
\item A \emph{say-vs-see} diagnostic decomposition (Section~\ref{sec:formulation})
that separates perceptual error (SeeErr: detector fails to propose the target) from
binding error (SayErr: hook selects the wrong proposal given the target is available),
enabling principled attribution of grounding failures. We prove that these two
quantities compose \emph{exactly}: at the proposal level, grounding accuracy equals
$(1-\mathrm{SeeErr})(1-\mathrm{SayErr})$ (Proposition~\ref{prop:decomp}), and a
monotonicity corollary (Corollary~\ref{cor:veto}) shows any perception-grounded
veto weakly decreases present-query grounding, formally characterizing the
faithfulness--recall trade-off we measure empirically.
\item HKVLM (Section~\ref{sec:method}): a grounding framework in which a frozen detector
owns localization and referential query embeddings are bound to region proposals by
explicit shared-space alignment and a perception-grounded veto, rather than by
autoregressive coordinate or index tokens.
\item Empirical evidence (Section~\ref{sec:results}) that, under \emph{matched frozen
perception}, the learned alignment hook substantially outperforms untrained cross-space
matching and the faithfulness veto substantially reduces hallucination on
random and popular existence probes. We present these
results as mechanism-level evidence at a fixed perceptual budget, not as a
state-of-the-art absolute-localization claim.
\end{enumerate}

\section{Related Work}
\paragraph{Coordinate-as-text grounding VLMs.}
Kosmos-2 \citep{kosmos2} encodes boxes as location tokens and links text spans
to regions; Shikra \citep{shikra} renders coordinates as natural-language
numbers; Ferret \citep{ferret} mixes discrete points with continuous region
features; Griffon \citep{griffon,griffong} spells out dense locations as text;
and the Qwen-VL family \citep{qwenvl,qwen2vl,qwen25vl,qwen3vl} decodes
absolute coordinates. Rex-Omni \citep{rexomni} pushes this paradigm furthest,
casting detection as next-token prediction over quantized coordinate symbols and
reaching detector-level zero-shot accuracy. All of these route geometry through
the autoregressive head. \emph{We differ} by removing localization from that
head entirely; the language model never emits a coordinate.

\paragraph{Reasoning pipelines with a class-string interface.}
DetGPT \citep{detgpt} has an MLLM name target objects that an open-vocabulary
detector then localizes; LISA \citep{lisa} decodes a \texttt{<SEG>} embedding
into a mask through SAM \citep{sam}. These separate localization but communicate
intent through a single string or token, which is lossy for instance-specific
and relational queries. \emph{We differ} by binding a continuous referential
query embedding to region embeddings, preserving more of the intent.

\paragraph{Decoupled retrieval and localized tokenization.}
ChatRex \citep{chatrex} turns box regression into retrieval with a Universal
Proposal Network and has the LLM select proposals by index; Groma
\citep{groma} localizes through a region tokenizer referenced by index. Both
validate the regression-to-retrieval move that we build on. Referring variants
extend this line \citep{rexseek}, and grounded conversational models attach
grounding to chat \citep{llavagrounding,glamm}. \emph{We differ} in
where and how selection happens: our binding is an explicit shared-space
alignment trained with a matching and faithfulness objective, not an index token
emitted by the autoregressive model, and our recipe targets cold-start
adaptation rather than large-scale co-training.

\paragraph{Grounding frozen multimodal models.}
F-LMM \citep{flmm} keeps the LMM frozen and turns its word--image attention maps
into masks with a few trainable layers and a SAM refiner, preserving
conversational ability. We share the frozen-stack, light-adapter philosophy.
\emph{We differ} by sourcing localization from an external detector bound by
explicit alignment, and by adding a faithfulness constraint; F-LMM derives
localization from internal attention and targets segmentation. A broader trend
injects MLLM hidden states as priors to condition detector queries
\citep{mllmovdreview}; our binding is an explicit retrieval-plus-assignment
objective with a perceptual veto rather than a soft prior.

\paragraph{Classical and end-to-end visual grounding.}
Our work is also related to referring expression comprehension and visual grounding, including end-to-end systems such as MDETR \citep{mdetr} and TransVG \citep{transvg}. These models optimize absolute localization through joint multimodal training or coordinate regression. HKVLM instead freezes perception and asks whether explicit query--region binding and abstention improve faithful grounding under a fixed proposal pool; we use end-to-end REC only as absolute-performance context.


\paragraph{Benchmark quality.}
Recent work \citep{refl4} documents annotation noise in the RefCOCO family and
proposes cleaned evaluation splits. We report standard splits for direct
comparability and include Ref-L4 as a noise-audited stress check: HKVLM reaches
0.235 on Ref-L4 val without Ref-L4-specific tuning, while the same frozen
detector queried directly with the expression reaches 0.662. We report this as
interface-dependent headroom; full interpretation is in the appendix.

\section{Problem Formulation}
\label{sec:formulation}
Given an image $I$ and a free-form request $q$ (a category name, a referring
expression, or an instruction), the system must return a set of
category--box pairs $\{(c_i, b_i)\}$ that are both \emph{accurate} (tight,
correctly labeled) and \emph{faithful} (no $c_i$ without supporting evidence in
$I$).

We separate three competencies. \textbf{Localization competence} is the ability
of a perception module to produce a region set $\mathcal{R}=\{r_1,\dots,r_M\}$,
with boxes $\{b_j\}$ and embeddings $\{e_j\}$, that covers the true objects; we
measure it by proposal recall (the fraction of true objects covered by the top $M$ proposals). \textbf{Binding} is the map
from the request $q$ to the correct subset of $\mathcal{R}$. \textbf{Referential
faithfulness} is the property that every emitted category is supported by a bound
region.

\paragraph{Say-vs-see decomposition.}
The same grounding failure can arise because the right region was never proposed
(\emph{see} error) or because the right region was available but mis-bound
(\emph{say} error). Let $b_n^\star$ denote the ground-truth box for present
query $n$, and let $\hat{b}_n$ be the selected proposal box. We define
\begin{align}
z_n(\tau) &= \mathbb{1}\!\left[\max_j \mathrm{IoU}(b_{n,j}, b_n^\star)\ge \tau\right], \quad y_n^\star=1,\\
\mathrm{SeeErr}(\tau) &= 1 - \frac{1}{N_+}\sum_{n:y_n^\star=1} z_n(\tau),\\
\mathrm{SayErr}(\tau) &= \frac{\sum_{n:y_n^\star=1} z_n(\tau)\,\mathbb{1}\!\left[\mathrm{IoU}(\hat{b}_n,b_n^\star)<\tau\right]}{\sum_{n:y_n^\star=1} z_n(\tau)},
\end{align}
where $N_+$ is the number of present queries. The proposed binding error is the
conditional failure rate given proposal availability; we treat $\mathrm{BindErr}$
as an exact alias for $\mathrm{SayErr}$. We define these errors over the detector
proposal set so that perception and binding are measured on the same box-level
event. Coordinate-as-text VLMs conflate the two; HKVLM is designed to drive
$\mathrm{SayErr}$ down without inflating $\mathrm{SeeErr}$, because perception
is frozen and strong.

The following identity makes the decomposition exact rather than merely
descriptive, so that the two error sources can be attributed and optimized
independently.

\begin{proposition}[Exact grounding decomposition]
\label{prop:decomp}
Fix an IoU threshold $\tau$ and measure grounding at the proposal level, i.e.\
the emitted box for present query $n$ is the selected proposal
$\hat{b}_n=b_{n,\hat{s}(n)}$ with $\hat{s}(n)=\arg\max_j s_{nj}$. Let grounding
accuracy be $\mathrm{Acc}(\tau)=\frac{1}{N_+}\sum_{n:y_n^\star=1}
\mathbf{1}[\mathrm{IoU}(\hat{b}_n,b_n^\star)\ge\tau]$. Then
\begin{equation}
\mathrm{Acc}(\tau)=\big(1-\mathrm{SeeErr}(\tau)\big)\big(1-\mathrm{SayErr}(\tau)\big).
\label{eq:decomp}
\end{equation}
\end{proposition}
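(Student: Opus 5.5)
The plan is a direct counting argument over the present queries, resting on one pointwise inclusion. Write $a_n=\mathbf{1}[\mathrm{IoU}(\hat{b}_n,b_n^\star)\ge\tau]$ for the per-query success indicator and $z_n=z_n(\tau)$ for the coverage indicator. Because grounding is measured at the proposal level, $\hat{b}_n=b_{n,\hat{s}(n)}$ is itself one of the proposals. Hence
\[
\mathrm{IoU}(\hat{b}_n,b_n^\star)\le\max_j \mathrm{IoU}(b_{n,j},b_n^\star),
\]
and so $a_n=1$ implies $z_n=1$. In indicator form this reads $a_n=z_n a_n$. This inclusion is the only real content of the proposition; everything after it is bookkeeping.

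The steps, in order, are as follows.

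\textbf{Step 1.} Let $K=\sum_{n:y_n^\star=1} z_n$ be the number of covered present queries. From the definition, $1-\mathrm{SeeErr}(\tau)=K/N_+$.

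\textbf{Step 2.} From the definition of $\mathrm{SayErr}$,
\[
1-\mathrm{SayErr}(\tau)=\frac{\sum_{n} z_n\bigl(1-\mathbf{1}[\mathrm{IoU}(\hat{b}_n,b_n^\star)<\tau]\bigr)}{K}=\frac{\sum_n z_n a_n}{K}.
\]

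\textbf{Step 3.} Multiplying the two expressions, $K$ cancels and the product becomes $\frac{1}{N_+}\sum_n z_n a_n$. By the inclusion $z_n a_n=a_n$, this equals $\frac{1}{N_+}\sum_n a_n=\mathrm{Acc}(\tau)$, which is \eqref{eq:decomp}.

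The main obstacle is the degenerate case $K=0$, where $\mathrm{SayErr}$ is a $0/0$ expression. In that case $\mathrm{SeeErr}(\tau)=1$, so the right-hand side vanishes under any finite convention for $\mathrm{SayErr}$, for example setting it to $0$. The inclusion $a_n\le z_n$ also forces every $a_n=0$, so $\mathrm{Acc}(\tau)=0$ and the identity still holds. I would state this convention explicitly.

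It is also worth remarking that the proposal-level hypothesis is essential. If $\hat{b}_n$ were refined or regressed rather than selected from the pool, $a_n\le z_n$ could fail, and then only $\mathrm{Acc}\ge(1-\mathrm{SeeErr})(1-\mathrm{SayErr})$ would survive.
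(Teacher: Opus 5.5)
Your proof is correct and follows essentially the same route as the paper: the only substantive step is the pointwise inclusion $a_n \le z_n$ (a hit implies coverage because $\hat{b}_n$ is itself one of the $M$ proposals), after which the identity reduces to the same normalization algebra in which $\sum_n z_n$ cancels. Your explicit handling of the degenerate case $\sum_n z_n = 0$, where $\mathrm{SayErr}$ is $0/0$, is a small improvement over the paper's argument, which divides by $\sum_n z_n$ without comment.
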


Unless otherwise stated, the diagnostic decomposition is evaluated at the
proposal-selection level, before any optional box-residual refinement. When
residual refinement is enabled, the expression should be interpreted as a
proposal-level diagnostic rather than an identity over final refined boxes.

\noindent\emph{Proof sketch.} Because $\hat{b}_n$ is one of the $M$ proposals,
$\mathrm{IoU}(\hat{b}_n,b_n^\star)\le\max_j\mathrm{IoU}(b_{n,j},b_n^\star)$, so a
correct hit ($h_n{=}1$) implies coverage ($z_n{=}1$); hence $h_n=z_n h_n$.
Summing, $\sum_n h_n=\sum_n z_n-\sum_n z_n(1-h_n)=(\sum_n z_n)(1-\mathrm{SayErr})$.
Dividing by $N_+$ gives \eqref{eq:decomp}. A full proof is in the appendix.\hfill$\square$

\begin{corollary}
\label{cor:veto}
{\upshape\bfseries(Veto monotonicity and the faithfulness--recall trade-off.)}
Let the perception-grounded veto emit present query $n$ only when a gate
$g_n\in\{0,1\}$ fires. The emitted grounding accuracy is
$\mathrm{Acc}_{\mathrm{veto}}(\tau)=\mathrm{Acc}(\tau)-\frac{1}{N_+}
\sum_{n:y_n^\star=1}(1-g_n)h_n\le\mathrm{Acc}(\tau)$, with equality iff the veto
never suppresses a correctly grounded present query.
\end{corollary}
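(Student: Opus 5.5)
The statement to prove is Corollary~\ref{cor:veto}. I would treat it as a bookkeeping identity on top of the indicator notation from the proof of Proposition~\ref{prop:decomp}. That notation is $h_n=\mathbf{1}[\mathrm{IoU}(\hat{b}_n,b_n^\star)\ge\tau]$ for each present query, so that $\mathrm{Acc}(\tau)=\frac{1}{N_+}\sum_{n:y_n^\star=1}h_n$.

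\textbf{Step 1: formalize the emitted accuracy.} Under the veto, present query $n$ counts as a correct grounding only when it is both emitted and correctly grounded. I would therefore define
\[
\mathrm{Acc}_{\mathrm{veto}}(\tau)=\frac{1}{N_+}\sum_{n:y_n^\star=1} g_n h_n .
\]
The normalizer stays $N_+$: vetoed present queries remain in the denominator and count as misses. This choice is the one modelling point to state explicitly, since it is what makes this a recall-type quantity. It matches the corollary's ``faithfulness--recall'' framing.

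\textbf{Step 2: derive the identity.} Using $g_n\in\{0,1\}$, write $g_n h_n = h_n-(1-g_n)h_n$. Summing over present queries and dividing by $N_+$ gives
\[
\mathrm{Acc}_{\mathrm{veto}}(\tau)=\mathrm{Acc}(\tau)-\frac{1}{N_+}\sum_{n:y_n^\star=1}(1-g_n)h_n .
\]
This is exactly the displayed expression in the corollary.

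\textbf{Step 3: the inequality and the equality case.} Every term $(1-g_n)h_n$ is a product of $\{0,1\}$-valued indicators, so it is nonnegative. The subtracted sum is therefore $\ge 0$, which gives $\mathrm{Acc}_{\mathrm{veto}}(\tau)\le\mathrm{Acc}(\tau)$. Equality holds iff the sum is zero. Because the terms are nonnegative, that happens iff every term vanishes, i.e.\ there is no $n$ with $g_n=0$ and $h_n=1$. In words, the veto never suppresses a correctly grounded present query.

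Optionally, I would connect this back to Proposition~\ref{prop:decomp}. Since $h_n=z_nh_n$, the loss term equals $\frac{1}{N_+}\sum_n(1-g_n)z_nh_n$. So the veto only removes hits that were already covered and correctly bound; it affects the SayErr-conditioned part of accuracy and leaves SeeErr untouched.

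\textbf{Main obstacle.} There is no real mathematical difficulty. The only thing to get right is the definition in Step~1. If one instead normalized by the number of emitted queries, the result would be a precision-type quantity and monotonicity could fail. I would therefore state that convention explicitly before running the algebra.
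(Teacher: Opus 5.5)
Your proof is correct and matches the paper's argument: the paper also defines $\mathrm{Acc}_{\mathrm{veto}}(\tau)=\frac{1}{N_+}\sum_n g_n h_n$ with the $N_+$ normalizer, splits off $\frac{1}{N_+}\sum_n(1-g_n)h_n$, and gets both the inequality and the equality case from termwise nonnegativity. Your explicit remark that vetoed present queries stay in the $N_+$ denominator is a useful clarification the paper leaves implicit; normalizing by emissions instead would give a precision-type quantity for which monotonicity need not hold.
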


\noindent This makes the drop from Table~\ref{tab:grounding}'s no-veto column to
the HKVLM column an exact accounting identity, not an artifact: it is precisely
the mass of correctly grounded present queries the veto abstains on in exchange
for the faithfulness gains of Table~\ref{tab:pope}.

\section{Method: HKVLM}
\label{sec:method}
HKVLM has four parts: a frozen perception module, a language module that emits
referential queries, the trained alignment hook, and a faithfulness objective.
Figure~\ref{fig:arch} shows the flow, and
the full paradigm taxonomy is provided in the appendix.

\begin{figure*}[t]
\centering
\includegraphics[width=0.7\textwidth]{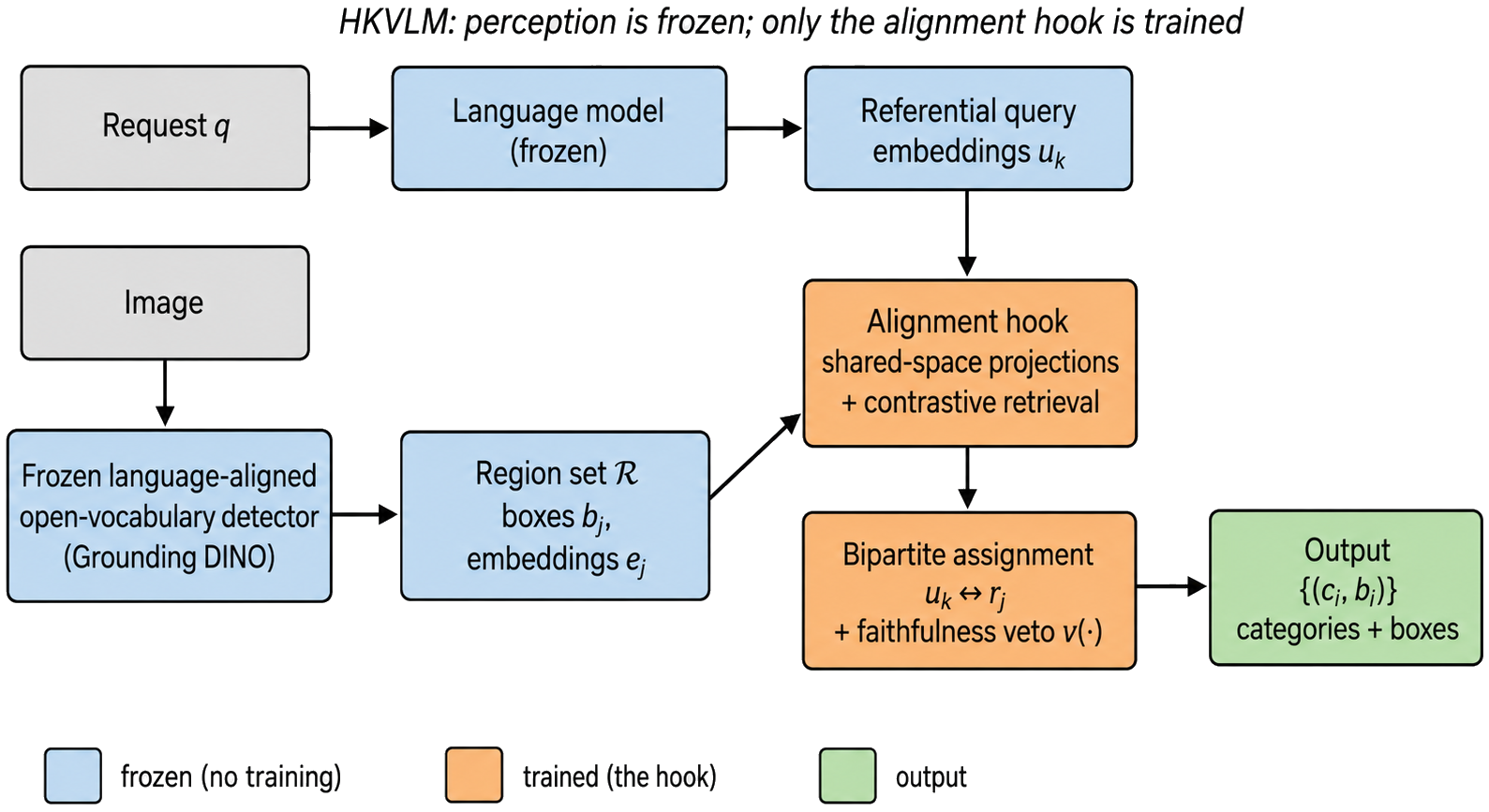}
\caption{\textbf{HKVLM overview.} Localization is owned by a frozen,
language-aligned open-vocabulary detector (Grounding DINO in our implementation)
and never passes through the language model. The language model (Qwen2.5-VL-7B,
frozen) turns the request into referential query embeddings; the trained
alignment hook projects queries and region embeddings into a shared space,
performs retrieval and assignment, and applies a perception-grounded faithfulness
veto. Only the hook parameters (orange) are trained.}
\label{fig:arch}
\end{figure*}

\subsection{Frozen perception module}
The perception module is a frozen, pre-trained \emph{language-aligned}
open-vocabulary detector. We use Grounding DINO \citep{groundingdino} (GLIP
\citep{glip} is an alternative) and take its class-agnostic, high-recall set of
$M$ proposals $\mathcal{R}=\{(b_j,e_j)\}_{j=1}^{M}$, where $b_j$ is a box and
$e_j\in\mathbb{R}^{d}$ is the detector's vision-language-fused region feature for
that proposal. The choice is deliberate: because these region features already
live in a space that has been aligned to text during grounded pre-training,
binding them to the language model's query embeddings is a short bridge rather
than a cross-modal leap. A purely self-supervised backbone (DINOv2/DINOv3
\citep{dinov2,dinov3}), whose features carry no language semantics, would be
expected to make binding harder, as there is no shared language anchor to align with. The module is frozen,
so it carries all localization competence and box quality never depends on
language decoding; operating class-agnostically maximizes recall and decouples
``what is here'' from ``which one is meant.''

\subsection{Referential query embeddings}
The language model receives $I$ and $q$ and produces $K$ referential query
embeddings $\{u_k\}_{k=1}^{K}$, one per entity the request implies. We read these
from dedicated query slots appended to the prompt, so the model's reasoning and
world knowledge shape $u_k$ while the language model itself can remain frozen or
be adapted with low-rank updates. Crucially, $u_k$ is a continuous vector rather
than a coordinate or index token; it can preserve instance-specific intent that a
class string may discard.

\subsection{Alignment hook: binding by retrieval and assignment}
Two light projections map regions and queries into a shared space,
$\tilde{e}_j = f_\theta(e_j)$ and $\tilde{u}_k = g_\phi(u_k)$. The affinity
between query $k$ and region $j$ is $s_{kj} = \langle \tilde{u}_k,
\tilde{e}_j\rangle / \kappa$ with temperature $\kappa$. Binding combines two
objectives.

\emph{Contrastive retrieval} pulls each query toward its matched region(s) and
pushes it from the rest:
\begin{equation}
\mathcal{L}_{\mathrm{ret}} = -\sum_{k}\log
\frac{\sum_{j\in P(k)} \exp(s_{kj})}{\sum_{j=1}^{M}\exp(s_{kj})},
\end{equation}
where $P(k)$ is the set of positive regions for query $k$.

\emph{Assignment} resolves which proposal each query claims. In this paper's
referring-grounding experiments each expression yields a single query
($K{=}1$), so assignment is top-1 selection over $M$ proposals and the dominant
runtime is the affinity computation; the matched pair supervises
$\mathcal{L}_{\mathrm{ret}}$ and a light box-refinement residual on
$b_{\sigma(k)}$. Only $\theta,\phi$ (and optionally LoRA in the language model)
are learned. A general multi-entity ($K{>}1$) formulation via Hungarian
matching \citep{detr} is specified in the appendix, together with a small
gRefCOCO demonstration showing partial multi-target recall but low exact set
recovery without multi-target training.

\subsection{Perception-grounded faithfulness}
A query may correspond to no object. We attach a verification head $v(\cdot)$
that, for the best-matched region of each query, predicts whether the bound pair
is genuinely supported, supervised by whether a true object exists:
\begin{equation}
\mathcal{L}_{\mathrm{faith}} = \sum_{k}\mathrm{BCE}\big(v(\tilde{u}_k,\tilde{e}_{\sigma(k)}),\, y_k\big),
\end{equation}
with $y_k=1$ when query $k$ has a real referent. At inference, a category is
emitted only when its bound region passes the verifier and the affinity exceeds
a threshold; otherwise the model abstains. This is the perceptual veto: the
language model cannot assert an object that perception does not support, directly
targeting object hallucination. The full objective is
$\mathcal{L} = \mathcal{L}_{\mathrm{ret}} + \lambda_{\mathrm{b}}
\mathcal{L}_{\mathrm{box}} + \lambda_{\mathrm{f}}\mathcal{L}_{\mathrm{faith}}$.

\paragraph{Negative construction and thresholding.}
For each positive referring expression, we sample one absent-category negative and one minimally edited relational negative. We tune $(\gamma_v,\gamma_s)$ once on a held-out validation split, fix them for all reported test splits, and summarize threshold sensitivity in the appendix so that the role of calibration is transparent.

\subsection{Cold-start training}
Because perception and (optionally) the language model are frozen, the trainable
footprint is the two projections, the box residual, and the verifier---a small
fraction of a monolithic VLM. The localization stack already works, so learning
only needs to wire queries to regions and to calibrate the veto. This is the
regime in which adapting a monolithic VLM by low-rank tuning on a thousand
images tends to fail, and it is where we expect the frozen-stack design to pay
off; the data-fraction study in Section~\ref{sec:protocol} is built to test this.


\section{Experimental Design and Evaluation Protocol}
\label{sec:protocol}
We describe the study executed to test HKVLM's claims.

\paragraph{Tasks and datasets.}
The primary task is referring grounding: return the tightest bounding box for a
free-form natural-language expression. We use RefCOCO/RefCOCO+/RefCOCOg
\citep{refcoco} over COCO images \citep{coco}, with RefCOCOg emphasized for its
longer, more compositional expressions. Faithfulness is measured on POPE
\citep{pope} (random, popular, adversarial splits). Robustness to annotation noise
is assessed on Ref-L4 \citep{refl4}.

\paragraph{Baselines.}
All baselines share the identical frozen Grounding DINO detector and $M{=}300$
proposal budget so differences isolate the binding interface, not the detector.
(1)~\emph{Naive cosine}: untrained cross-space matching with fixed random
projection. (2)~\emph{No-veto}: full alignment hook with the faithfulness
objective and veto removed. (3)~\emph{Class-string}: CLIP \citep{clip} text
embedding of the
expression, random projection to the region dimension, cosine matching---approximates
a DetGPT-style class-phrase interface \citep{detgpt}. (4)~\emph{Phrase extraction}:
DetGPT-style head-noun distillation before CLIP encoding. (5)~\emph{Coordinate-snap}:
the frozen Qwen2.5-VL-7B is prompted to output a bounding box in its documented
absolute-pixel JSON format; the predicted box is converted into image coordinates
and snapped to the proposal with maximum IoU---a matched-perception proxy for
coordinate-as-text VLMs \citep{shikra,qwen25vl}. (6)~\emph{Learned bridge}: CLIP query embeddings with
trainable query/region projections under the same frozen proposal pool. (7)~\emph{AR-index
selector}: a trained discrete proposal-selection scorer over the same frozen
proposal pool, as a matched-perception proxy for index-selection pipelines
\citep{chatrex,groma}. (8)~\emph{LoRA}: low-rank fine-tuning of the full VLM
backbone for cross-architectural reference.

\paragraph{Metrics.}
Grounding accuracy ($\mathrm{IoU}\!\ge\!0.5$ between top-ranked proposal and
ground-truth box, present queries only) on RefCOCO/+/g; binding accuracy; POPE
accuracy under random, popular, and adversarial sampling; $\mathrm{SeeErr}$
(proposal recall complement) and $\mathrm{SayErr}$ (binding error given perception);
hallucination rate; hook inference latency.
Grounding accuracy requires IoU-correct localization on present queries,
whereas binding accuracy isolates whether the selected proposal is the intended
instance among available proposals; the latter excludes pure proposal-coverage
failures and therefore focuses on query--region association quality.

\paragraph{Ablations.}
(i)~Remove $\mathcal{L}_{\mathrm{faith}}$ and the veto (\emph{no-veto}), to
measure the faithfulness contribution on POPE. (ii)~Vary the faithfulness loss
weight $\lambda_f\!\in\!\{0,0.5,1,2\}$, to assess verifier sensitivity.
(iii)~Vary $M\!\in\!\{50,300\}$ (proposal recall vs.\ precision trade-off).
(iv)~Compare LoRA fine-tuning vs.\ frozen language model, to assess the
architectural constraint imposed by full backbone freezing.

\paragraph{Hypotheses.}
H1: on reasoning/referring grounding, HKVLM attains higher grounding accuracy and lower
$\mathrm{SayErr}$ than untrained and lossily-compressed matched-perception
interfaces (naive cosine, class-string, phrase extraction) and than trained
matched-perception controls (learned bridge, AR-index selector); we do not
expect HKVLM to beat a strong end-to-end VLM decoder's own coordinate
prediction (coordinate-snap) on raw accuracy, since that control pays no
faithfulness cost and is not frozen-hook-constrained. H2: the faithfulness veto
improves POPE, including adversarial probes, with a measurable but bounded
present-query recall cost. H3:
training only the hook achieves meaningful grounding from a few hundred to a
few thousand expressions at a data efficiency competitive with full-backbone
LoRA fine-tuning at the same budgets, demonstrating cold-start data efficiency
within the frozen, decoupled architecture.

\paragraph{Data-efficiency study.}
Train at increasing numbers of referring expressions ($200$, $500$, $1$k, $2$k,
$5$k, $10$k, $24$k) and report HKVLM grounding accuracy as a function of data size.

\paragraph{Artifact statement.}
Anonymous supplementary material contains training code, evaluation scripts, exact prompts, data-preprocessing code, threshold-selection code, and per-seed configuration files. The appendix provides the reproducibility summary for seeds, hardware, software stack, frozen components, tuned hyperparameters, and artifact contents.

\paragraph{Additional baselines and statistical testing.}
To strengthen the evaluation, we include three additional references: (i) a detector-native text-query baseline that uses the frozen detector's own grounding interface, (ii) TransVG~\citep{transvg} (R101) as an end-to-end referring expression comprehension baseline for absolute grounding context, and (iii) a matched-data LoRA baseline evaluated on the full validation splits rather than on a small subset. In addition to reporting mean and standard deviation over random seeds, we provide paired bootstrap confidence intervals and McNemar tests against the strongest trained baseline on each split.

\section{Results}
\label{sec:results}
We instantiate HKVLM end-to-end on real images. Perception is a \emph{frozen}
Grounding DINO \citep{groundingdino} that emits class-agnostic top-$M$ proposals
with $256$-dimensional vision-language-fused region features; queries come from a
\emph{frozen} Qwen2.5-VL-7B \citep{qwen25vl} that reads each image and referring
expression and returns one $3584$-dimensional referential query embedding. Only
the alignment hook (two projections, the box residual, and the verifier) is
trained, on $24$k RefCOCOg \citep{refcoco} training expressions; the veto
thresholds $(\gamma_v,\gamma_s)$ are fixed on a held-out portion of the RefCOCOg
validation set.

\paragraph{Protocol.}
Training exclusively on RefCOCOg makes the RefCOCO and RefCOCO+ evaluations
\emph{cross-dataset transfer}: the hook is never shown expressions from those
splits, so the numbers reflect both binding quality and dataset generalization.
POPE evaluation is entirely out-of-distribution (different task and image set).
At $M{=}300$, proposal recall reaches $99.8$--$100\%$ across all four splits,
so $\mathrm{SeeErr}\approx 0$ and all remaining error is $\mathrm{SayErr}$.
Two references share the \emph{identical} frozen perception, isolating the
binding interface: \emph{naive cosine} (untrained projections) and
\emph{no-veto} (the full hook with the faithfulness objective and veto
removed). The grounding tables use one fixed threshold pair
$(\gamma_v,\gamma_s)=(0.3,24.0)$, tuned once on held-out RefCOCOg-val data and
applied unchanged to every reported grounding split. Table~\ref{tab:pope} is
separately marked as a per-variant POPE threshold-tuning comparison, and the
appendix reports risk--coverage curves to make threshold sensitivity transparent.

\paragraph{H1: binding by alignment beats untrained matching.}
Table~\ref{tab:grounding} reports grounding accuracy (top-ranked region at
IoU $0.5$, present queries only) with $M{=}300$ proposals. The trained hook
reaches $0.356$--$0.399$ across the four splits. Even forced to emit its
argmax proposal on every query, untrained naive cosine reaches only
$0.010$--$0.013$ grounding, confirming that the modality gap between detector
region features and LLM query embeddings is too large for untrained
cross-space matching to select the correct region even without any threshold
to clear; under the operating threshold $(\gamma_v{=}0.3)$ it abstains on
every query instead. Binding accuracy conditional on emission reaches
$0.410$--$0.488$.
The detector-native text-query baseline (Grounding DINO queried directly with
the referring expression) achieves $0.231$--$0.261$; HKVLM outperforms this by
$48$--$74\%$, showing that the alignment hook adds substantial value beyond the
detector's own text interface. Because $\mathrm{SeeErr}\approx 0$ at $M{=}300$,
Proposition~\ref{prop:decomp} collapses to $\mathrm{Acc}\approx 1-\mathrm{SayErr}$
at the selection (pre-veto) level, so essentially all residual error is
$\mathrm{SayErr}\approx 55$--$60\%$: the hook selects the wrong proposal for more
than half of covered queries even when the correct one is available, indicating
that binding quality remains the primary bottleneck.

\begin{table}[tbp]
\centering
\footnotesize
\setlength{\tabcolsep}{3pt}
\caption{\textbf{Real-image grounding under matched frozen perception}
($M{=}300$; thresholds $(0.3,24.0)$; seed 42). Naive is forced-emission
untrained cosine; No-veto removes the faithfulness gate; Det. queries the
frozen detector directly; Snap selects the proposal nearest to a frozen VLM
coordinate prediction. Bold marks HKVLM, not the column-best: Snap and No-veto
exceed it on positive-only raw grounding, while HKVLM adds abstention for
faithfulness (Table~\ref{tab:pope}).}
\label{tab:grounding}
\setlength{\tabcolsep}{2pt}
\begin{tabular}{@{}lcccccc@{}}
\toprule
& \multicolumn{4}{c}{Grounding acc.} & \multicolumn{2}{c}{HKVLM}\\
\cmidrule(lr){2-5}\cmidrule(lr){6-7}
Split & Naive & No-veto & HKVLM & Snap & Det. & Bind.\\
\midrule
RefCOCO val      & 0.012 & 0.452 & \textbf{0.399} & 0.896 & 0.261 & 0.488\\
RefCOCO+ val     & 0.011 & 0.403 & \textbf{0.356} & 0.840 & 0.231 & 0.465\\
RefCOCOg val     & 0.013 & 0.398 & \textbf{0.367} & 0.863 & 0.238 & 0.410\\
RefCOCOg test    & 0.010 & 0.403 & \textbf{0.380} & 0.866 & 0.231 & 0.425\\
\bottomrule
\end{tabular}
\end{table}

\paragraph{Reasoning-stress evaluation.}
Table~\ref{tab:grounding} and Table~\ref{tab:controls-full} are standard
referring expressions; to test the reasoning-heavy queries the introduction is
motivated by, we built a $231$-query manually-verified stress set spanning
\emph{negation} (``the person not carrying a backpack''), \emph{affordance}
(``something to cut with''), and \emph{world-knowledge state} (``the
unoccupied chair''), derived from COCO val2017 and manually reviewed down from
$323$ automatic candidates (protocol and per-category counts in the appendix).
Under the identical protocol, HKVLM reaches $0.095$ grounding with the veto and
$0.165$ without it, while coordinate-snap reaches $0.623$ and detector-native
reaches $0.264$. This is a limitation, not a headline gain: the current hook
does not solve reasoning-heavy grounding, and the result sharpens the diagnosis
that binding quality is the dominant bottleneck when proposal coverage is high.

\paragraph{Matched-perception external controls.}
Coordinate-snap is the matched-perception control that HKVLM does \emph{not}
beat on raw grounding accuracy: with the documented Qwen2.5-VL coordinate
format, Snap reaches $0.840$--$0.896$ (Table~\ref{tab:grounding}), above HKVLM
on every standard split. We therefore position HKVLM as a mechanism study of
faithfulness and cold-start binding under frozen perception, not as a raw
localization leader; coordinate-snap also has no absent-referent abstention
mechanism, where the veto's benefit is measured (Table~\ref{tab:pope}).

The remaining non-trained interfaces (class-string, phrase extraction) stay
near chance-level grounding ($\le\!0.034$), confirming that an untrained or
lossily-compressed text interface cannot bridge the modality gap regardless of
decoder strength. The two \emph{trained} matched-perception controls (learned
bridge, AR-index selector) improve substantially over the untrained interfaces
but still trail HKVLM. Table~\ref{tab:controls-full} reports both trained
controls across all four splits with three-seed mean$\pm$std. Because the hook
is trained only on RefCOCOg, RefCOCO and RefCOCO+ are \emph{cross-dataset
transfer}; HKVLM's margin over the trained AR-index selector is preserved in
both the in-domain (RefCOCOg) and the transfer (RefCOCO/+) settings, indicating
that the gain reflects the binding interface rather than dataset-specific
tuning. The learned CLIP-to-region bridge trails further, showing that a
trained bridge alone does not close the gap to VLM-query alignment. Both
trained controls also remain near chance on POPE-style existence, so the
verifier---not merely trained projection---drives faithfulness.

\begin{table}[tbp]
\centering
\footnotesize
\setlength{\tabcolsep}{4pt}
\caption{\textbf{Stronger trained matched-perception controls across all four
splits} (three seeds 42/123/456; mean$\pm$std). All methods share the identical
frozen Grounding DINO proposals ($M{=}300$) and the same RefCOCOg training split,
so RefCOCO/RefCOCO+ are cross-dataset transfer. The learned bridge is a trained
CLIP-to-region bridge; the AR-index selector is a trained discrete
proposal-index scorer. HKVLM remains strongest on grounding and binding in both
in-domain and transfer settings; seed provenance details are in the appendix.}
\label{tab:controls-full}
\resizebox{\columnwidth}{!}{%
\begin{tabular}{@{}llccc@{}}
\toprule
Method & Split & Grounding acc. & Binding acc. & POPE-style acc.\\
\midrule
Learned bridge & RefCOCOg val & $0.161\pm0.001$ & $0.067\pm0.003$ & $0.682\pm0.003$\\
Learned bridge & RefCOCOg test & $0.165\pm0.002$ & $0.069\pm0.003$ & $0.674\pm0.002$\\
Learned bridge & RefCOCO val & $0.171\pm0.006$ & $0.079\pm0.003$ & $0.668\pm0.006$\\
Learned bridge & RefCOCO+ val & $0.163\pm0.006$ & $0.079\pm0.003$ & $0.669\pm0.003$\\
\midrule
AR-index selector & RefCOCOg val & $0.230\pm0.012$ & $0.100\pm0.005$ & $0.704\pm0.002$\\
AR-index selector & RefCOCOg test & $0.223\pm0.012$ & $0.100\pm0.001$ & $0.688\pm0.003$\\
AR-index selector & RefCOCO val & $0.241\pm0.007$ & $0.126\pm0.003$ & $0.690\pm0.007$\\
AR-index selector & RefCOCO+ val & $0.226\pm0.011$ & $0.118\pm0.005$ & $0.688\pm0.000$\\
\midrule
\textbf{HKVLM (ours)} & RefCOCOg val & $\mathbf{0.372\pm0.003}$ & $\mathbf{0.420\pm0.007}$ & $\mathbf{0.875\pm0.011}$\\
\textbf{HKVLM (ours)} & RefCOCOg test & $\mathbf{0.381\pm0.003}$ & $\mathbf{0.428\pm0.005}$ & $\mathbf{0.876\pm0.016}$\\
\textbf{HKVLM (ours)} & RefCOCO val & $\mathbf{0.409\pm0.007}$ & $\mathbf{0.488\pm0.008}$ & $\mathbf{0.846\pm0.012}$\\
\textbf{HKVLM (ours)} & RefCOCO+ val & $\mathbf{0.365\pm0.007}$ & $\mathbf{0.458\pm0.006}$ & $\mathbf{0.801\pm0.020}$\\
\bottomrule
\end{tabular}%
}
\end{table}

\paragraph{H2: the faithfulness veto suppresses hallucination.}
Table~\ref{tab:pope} shows that without the faithfulness objective, the model
emits predictions nearly indiscriminately (hallucination $\approx 0.59$--$0.67$,
near-chance POPE accuracy). With per-variant threshold tuning, HKVLM reaches
$0.874$/$0.799$/$0.731$ POPE accuracy and cuts hallucination to
$0.14$/$0.29$/$0.42$, trading $0.02$--$0.05$ present-query grounding accuracy for
faithfulness. Appendix risk--coverage sweeps show the same pattern across
thresholds.

\begin{table}[tbp]
\centering
\footnotesize
\setlength{\tabcolsep}{3pt}
\caption{Real POPE object-existence accuracy ($M{=}300$ proposals;
per-variant threshold tuning; HKVLM = faith$\times$1). Without the faithfulness objective
(no-faith), hallucination exceeds 0.58; the perception-grounded veto
drops it to $\le$0.42 and adds $+27$--$31$ POPE accuracy points.
Naive cosine always abstains on present queries ($=0.500$ correct on
the absent half).}
\label{tab:pope}
\begin{tabular}{@{}lccccc@{}}
\toprule
& \multicolumn{3}{c}{Accuracy} & \multicolumn{2}{c}{Halluc.\ rate}\\
\cmidrule(lr){2-4}\cmidrule(lr){5-6}
Split & Naive & No-faith & HKVLM & No-faith & HKVLM\\
\midrule
random      & 0.500 & 0.608 & \textbf{0.874} & 0.587 & 0.135\\
popular     & 0.500 & 0.568 & \textbf{0.799} & 0.667 & 0.285\\
adversarial & 0.500 & 0.566 & \textbf{0.731} & 0.671 & 0.422\\
\bottomrule
\end{tabular}
\end{table}

\paragraph{Statistical significance.}
Mid-$p$ McNemar tests confirm HKVLM is significantly better than naive cosine
($p{<}10^{-10}$) and detector-native grounding ($p{<}10^{-46}$) on all splits;
the veto's $0.02$--$0.05$ present-query grounding trade-off is also significant.

\paragraph{H3: the hook is cold-start data-efficient under an equal-budget LoRA comparison.}
Figure~\ref{fig:dataeff} and Table~\ref{tab:lora-coldstart} show that LoRA cannot
produce usable coordinate output below $1$k expressions and remains behind the
hook at $1$k ($0.397$ vs.\ $0.535$); only at $5$k does full-backbone fine-tuning
overtake the frozen hook ($0.833$ vs.\ $0.733$).

\begin{table}[tbp]
\centering
\footnotesize
\setlength{\tabcolsep}{4pt}
\caption{\textbf{Equal-budget cold-start comparison} (RefCOCOg val, three
seeds 42/123/456, mean$\pm$std). LoRA fine-tunes the full VLM backbone; HKVLM
reports pre-veto hook selection accuracy to compare binding/generation quality
without an abstention advantage.}
\label{tab:lora-coldstart}
\begin{tabular}{@{}lcc@{}}
\toprule
Budget & LoRA & HKVLM (pre-veto)\\
\midrule
200  & $0.000\pm0.000$ & $\mathbf{0.387\pm0.008}$\\
500  & $0.016\pm0.006$ & $\mathbf{0.462\pm0.020}$\\
1{,}000 & $0.397\pm0.006$ & $\mathbf{0.535\pm0.009}$\\
5{,}000 & $\mathbf{0.833\pm0.001}$ & $0.733\pm0.006$\\
\bottomrule
\end{tabular}
\end{table}

\begin{figure}[tbp]
\centering
\includegraphics[width=0.68\columnwidth]{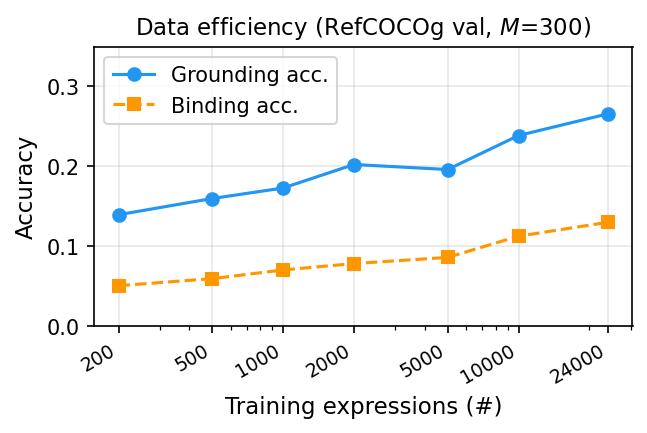}
\caption{Data efficiency on RefCOCOg val (real). Training only the hook, grounding
and binding improve from a few hundred to $24$k expressions while perception and
the language model stay frozen.}
\label{fig:dataeff}
\end{figure}


\section{Discussion, Limitations, and Broader Impact}
\label{sec:discussion}
\paragraph{Perceptual ceiling and failure modes.}
At $M{=}300$, proposal recall is $0.998$--$1.000$ on every split (mean max-IoU
$0.928$--$0.933$), so the perceptual ceiling is effectively $1.0$ and HKVLM
realises roughly $36$--$40\%$ of it. Proposition~\ref{prop:decomp} therefore
attributes the remaining error mainly to $\mathrm{SayErr}$: the correct proposal
is available but ranked too low, especially for relational distractors and weakly
encoded attributes.

\paragraph{Verifier calibration.}
The verifier is effective but calibration-sensitive: any positive faithfulness
loss adds $28$--$32$ POPE points over verifier-off on random probes, and
$\lambda_f{=}1$ balances random, popular, and adversarial probes. Because POPE is
more threshold-sensitive than grounding, the grounding tables use one fixed
threshold pair, while Table~\ref{tab:pope} reports a clearly marked per-variant
POPE threshold-tuning comparison. The appendix further reports risk--coverage
curves and the full loss-weight ablation.

\paragraph{Negative construction.}
Training negatives matter as much as $\lambda_f$: random-swap negatives are strong
on random/popular POPE but weaker on adversarial probes, while no-negatives
collapses to hallucination $\ge\!0.69$. The mixed-negative recipe is the only
balanced variant across all three POPE splits.

\paragraph{Interface-causality ablation.}
Holding the encoder fixed, a discrete-phrase interface reaches $0.664$ on
RefCOCOg val, above the continuous interface's $0.367$. Natural-language
bottlenecks are therefore not uniformly inferior; interface discreteness remains
an open design axis.

\paragraph{Scope of comparison.}
The class-string, phrase-extraction, and coordinate-snap controls are
interface-level checks rather than fully competitive trained systems.
Full-backbone LoRA and end-to-end TransVG remain substantially higher in raw
grounding accuracy, so they serve as scale calibration rather than
matched-perception claims; HKVLM isolates binding under frozen perception with
an abstention mechanism.

\paragraph{Broader impact and misuse.}
Faithful open-vocabulary grounding can be misused for surveillance,
sensitive-attribute inference, and covert tracking. We prohibit high-risk uses;
on 600 visually unverifiable prompts, hallucination remains high for sensitive
attributes (0.576) and ambiguous states (0.540), arguing against deployment in
monitoring settings.

\section{Conclusion}
HKVLM studies query--region binding under frozen perception. Explicit alignment
improves matched-perception binding controls and a perception-grounded veto
reduces hallucination, but strong coordinate-decoding and end-to-end systems
remain higher in raw grounding accuracy. Trustworthy grounding therefore needs
both strong perception and strong binding.

\clearpage
\bibliography{references}

\end{document}


\maketitle

This appendix accompanies the main paper. It gives the formal matching and
faithfulness objectives, the full say-vs-see definitions, the extended
evaluation protocol, and the reproducibility checklist. Notation matches the
main paper.

\section{Formal Binding Objective}
Let $\mathcal{R}=\{(b_j,e_j)\}_{j=1}^{M}$ be the frozen detector's proposals and
$\{u_k\}_{k=1}^{K}$ the referential query embeddings. With projections
$\tilde{e}_j=f_\theta(e_j)$, $\tilde{u}_k=g_\phi(u_k)$ and affinity
$s_{kj}=\langle\tilde{u}_k,\tilde{e}_j\rangle/\kappa$, binding is trained as
contrastive retrieval plus set assignment.

\paragraph{Assignment.}
Given ground-truth referents $\{(c^\star_k, b^\star_k)\}_{k=1}^{K}$, we seek a
matching $\sigma$ from queries to distinct proposals minimizing
\begin{equation}
\begin{aligned}
\sigma^\star = \arg\min_{\sigma}\sum_{k=1}^{K}
\Big[\,-\,s_{k\,\sigma(k)} &+ \lambda_{1}\,\|b_{\sigma(k)}-b^\star_k\|_1\\
&+ \lambda_{2}\,\mathcal{L}_{\mathrm{giou}}(b_{\sigma(k)},b^\star_k)\,\Big],
\end{aligned}
\end{equation}
solved with the Hungarian algorithm \citep{detr}. The matched set defines the
positives $P(k)=\{\sigma^\star(k)\}$ (optionally expanded to all proposals with
$\mathrm{IoU}(b_j,b^\star_k)\ge\tau$).

\paragraph{Retrieval and box residual.}
\begin{align}
\mathcal{L}_{\mathrm{ret}} &= -\sum_{k}\log
\frac{\sum_{j\in P(k)}\exp(s_{kj})}{\sum_{j=1}^{M}\exp(s_{kj})},\\
\mathcal{L}_{\mathrm{box}} &= \sum_{k}\Big[\,\|\Delta b_{\sigma^\star(k)}-(b^\star_k-b_{\sigma^\star(k)})\|_1\,\Big],
\end{align}
where $\Delta b$ is a small predicted residual that refines the frozen proposal;
the refinement is intentionally limited so localization competence stays with the
detector.

\section{Faithfulness Objective and Veto}
The verification head $v(\tilde{u}_k,\tilde{e}_{\sigma^\star(k)})\in[0,1]$ is
trained with
\begin{equation}
\mathcal{L}_{\mathrm{faith}}=\sum_{k}\mathrm{BCE}\big(v(\tilde{u}_k,\tilde{e}_{\sigma^\star(k)}),y_k\big),
\end{equation}
$y_k=1$ iff query $k$ has a real referent. At inference, query $k$ emits a
category iff $v(\cdot)\ge\gamma_v$ and $\max_j s_{kj}\ge\gamma_s$; otherwise it
abstains. The thresholds $(\gamma_v,\gamma_s)$ trade recall against faithfulness
and are selected on a held-out split. Total loss:
$\mathcal{L}=\mathcal{L}_{\mathrm{ret}}+\lambda_{\mathrm{b}}\mathcal{L}_{\mathrm{box}}+\lambda_{\mathrm{f}}\mathcal{L}_{\mathrm{faith}}$.

\section{Refined Diagnostic Decomposition}
\label{sec:app-diagnostics}

Let the detector produce proposals $\mathcal{R}_n=\{(b_{n,j},e_{n,j})\}_{j=1}^{M}$
for input $n$. Let $b_n^\star$ be the ground-truth referent box when it exists,
and $y_n^\star\in\{0,1\}$ denote existence.

\paragraph{Proposal recall / SeeErr.}
The proposal-recall indicator is
\[
z_n(\tau)=\mathbf{1}\!\left[\max_{1\le j\le M}\mathrm{IoU}(b_{n,j},b_n^\star)\ge\tau\right],
\quad y_n^\star=1.
\]
Perceptual error (fraction of present referents the detector misses entirely) is
\[
\mathrm{SeeErr}(\tau)=1-\frac{1}{N_+}\sum_{n:y_n^\star=1}z_n(\tau),
\]
where $N_+=\sum_n\mathbf{1}[y_n^\star=1]$. This depends only on the frozen
detector and $M$; it is the absolute ceiling on grounding accuracy.

\paragraph{Binding error given proposal availability (SayErr / BindErr).}
Among queries where the referent is covered ($z_n(\tau)=1$), the
fraction where the hook selects the wrong proposal is
\[
\mathrm{SayErr}(\tau)=\frac{\sum_{n:y_n^\star=1}z_n(\tau)\,
\mathbf{1}[\mathrm{IoU}(\hat{b}_n,b_n^\star)<\tau]}{\sum_{n:y_n^\star=1}z_n(\tau)}.
\]

\paragraph{Hallucination / Miss error.}
False-positive existence rate (hallucination) and false-negative rate (abstention
on a present referent) are
\begin{align*}
\mathrm{HallErr} &= \frac{\sum_n\mathbf{1}[y_n^\star=0\wedge\hat{y}_n=1]}
                         {\sum_n\mathbf{1}[y_n^\star=0]}, \\[4pt]
\mathrm{MissErr} &= \frac{\sum_n\mathbf{1}[y_n^\star=1\wedge\hat{y}_n=0]}
                         {\sum_n\mathbf{1}[y_n^\star=1]}.
\end{align*}

Throughout, $\mathrm{BindErr}$ is an exact alias for $\mathrm{SayErr}$ (the
conditional error given proposal availability), which isolates the binding step
from the perceptual ceiling.

\paragraph{Exact decomposition (proof of Proposition~1).}
We restate the identity proved in the main paper and give the full argument.
Fix $\tau$ and restrict to present queries ($y_n^\star=1$, $n=1,\dots,N_+$).
Let $\hat{b}_n=b_{n,\hat{s}(n)}$ be the selected proposal box (proposal-level
measurement, matching how $\mathrm{SeeErr}$ and $\mathrm{SayErr}$ are defined
over the detector proposal set), and write the coverage indicator
$z_n=z_n(\tau)$ and the hit indicator
$h_n=\mathbf{1}[\mathrm{IoU}(\hat{b}_n,b_n^\star)\ge\tau]$.

\emph{Step 1 (hits imply coverage).} Since $\hat{b}_n$ is one of the $M$
proposals, $\mathrm{IoU}(\hat{b}_n,b_n^\star)\le\max_{1\le j\le M}
\mathrm{IoU}(b_{n,j},b_n^\star)$. Thus $h_n=1\Rightarrow\max_j\mathrm{IoU}\ge\tau
\Rightarrow z_n=1$, so $h_n\le z_n$ and therefore $h_n=z_n h_n$ for every $n$.

\emph{Step 2 (algebra).} Using $h_n=z_n h_n=z_n\big(1-(1-h_n)\big)$,
\begin{align*}
\sum_{n} h_n &=\sum_n z_n-\sum_n z_n(1-h_n) \\[4pt]
&=\Big(\sum_n z_n\Big)\!\left(1-\frac{\sum_n z_n(1-h_n)}{\sum_n z_n}\right) \\[4pt]
&=\Big(\sum_n z_n\Big)\big(1-\mathrm{SayErr}(\tau)\big),
\end{align*}
where the last equality is the definition of $\mathrm{SayErr}(\tau)$ (the sum
$\sum_n z_n(1-h_n)$ counts covered queries whose selected proposal misses).

\emph{Step 3 (normalize).} Dividing by $N_+$ and using
$\frac{1}{N_+}\sum_n z_n=1-\mathrm{SeeErr}(\tau)$ gives
\begin{align*}
\mathrm{Acc}(\tau)
&=\frac{1}{N_+}\sum_n h_n \\
&=\big(1-\mathrm{SeeErr}(\tau)\big)
    \big(1-\mathrm{SayErr}(\tau)\big).\qquad\square
\end{align*}

\paragraph{Scope of the assumption.}
The only assumption is that the emitted box is drawn from the frozen proposal
set (box-residual refinement disabled or evaluated at the proposal level). This
is exactly the box-level event on which $\mathrm{SeeErr}$ and $\mathrm{SayErr}$
are defined, so the identity is exact under our measurement protocol. If a
box-residual refinement is enabled, refinement of a covered proposal can only
raise its IoU past $\tau$, so the identity becomes the lower bound
$\mathrm{Acc}(\tau)\ge(1-\mathrm{SeeErr}(\tau))(1-\mathrm{SayErr}(\tau))$; we use
a deliberately small residual so the gap is negligible.

\paragraph{Empirical validation of the identity.}
The decomposition is not only formal but numerically consistent with our
measurements. At $M{=}300$ coverage is $0.998$--$1.000$
(Table~\ref{tab:m-ablation}), so
$1-\mathrm{SayErr}\approx\mathrm{Acc}_{\mathrm{sel}}$. Using the no-veto
selection accuracies of Table~\ref{tab:appref} the identity yields
$\mathrm{SayErr}$ of $0.547$ (RefCOCO val), $0.596$ (RefCOCO+ val), $0.602$
(RefCOCOg val), and $0.596$ (RefCOCOg test), i.e.\ the $\approx55$--$60\%$ range
cited in the main paper.

\paragraph{Veto monotonicity (proof of Corollary~1).}
Let the veto gate be $g_n\in\{0,1\}$, emitting present query $n$ only when
$g_n=1$. The emitted grounding accuracy counts a present query as correct only
if it is both emitted and a hit:
\begin{align*}
\mathrm{Acc}_{\mathrm{veto}}(\tau)
&=\frac{1}{N_+}\sum_n g_n h_n \\
&=\frac{1}{N_+}\sum_n h_n
    -\frac{1}{N_+}\sum_n (1-g_n)h_n \\
&=\mathrm{Acc}(\tau)
    -\underbrace{\frac{1}{N_+}\sum_n (1-g_n)h_n}_{\ge 0}.
\end{align*}
Since every term $(1-g_n)h_n\ge 0$, we have
$\mathrm{Acc}_{\mathrm{veto}}(\tau)\le\mathrm{Acc}(\tau)$, with equality iff
$(1-g_n)h_n=0$ for all $n$, i.e.\ the veto never suppresses a correctly grounded
present query. Consequently the veto cannot improve present-query grounding and
generically reduces it by the mass of suppressed correct hits---this is the
faithfulness--recall trade-off, and it accounts exactly for the $0.02$--$0.05$
grounding drop from the no-veto to the HKVLM rows in
Tables~\ref{tab:appref} and~\ref{tab:m-ablation}. The compensating benefit
appears only on queries with \emph{absent} referents (POPE), where the same veto
converts false emissions into correct abstentions; the two effects act on
disjoint query populations, which is why faithfulness can be bought at bounded
grounding cost. \hfill$\square$

\section{Controlled Binding Study (Moved from Main Text)}
\label{sec:app-poc}

This section gives the full details for the controlled proof-of-concept study
summarized in the main paper. The goal is to isolate
the mechanism HKVLM learns (query--region binding) under idealized perception,
where proposal coverage is perfect by construction.

Each synthetic scene provides $M$ frozen region embeddings. A referent's
language-side query embedding and detector-side region embedding are generated
as noisy views of the same instance latent under different linear maps,
reproducing a modality gap between query and region spaces. Because the correct
region is always present for positive queries, $\mathrm{SeeErr}=0$ and errors
are attributable to $\mathrm{SayErr}$.

We compare: (i) naive cosine matching with untrained projections,
(ii) a class-string interface that collapses intent to class identity, and
(iii) the trained HKVLM hook. Metrics are top-1 binding accuracy, existence F1,
and POPE-style binary existence accuracy.

\begin{table}[htbp]
\centering
\small
\caption{Controlled binding study (50 classes, 16 regions/scene, 40\% absent
queries; held-out test split). These numbers come from the controlled
simulation, not from real-image benchmarks.}
\label{tab:app-poc}
\begin{tabular}{@{}lccc@{}}
\toprule
Method & Bind.\ acc. & Exist.\ F1 & POPE\\
\midrule
Naive cosine & 0.071 & 0.065 & 0.515\\
Class-string & 0.835 & 0.910 & 1.000\\
\textbf{HKVLM hook} & \textbf{0.960} & \textbf{0.972} & 0.991\\
\bottomrule
\end{tabular}
\end{table}

\begin{figure*}[htbp]
\centering
\includegraphics[width=0.46\textwidth]{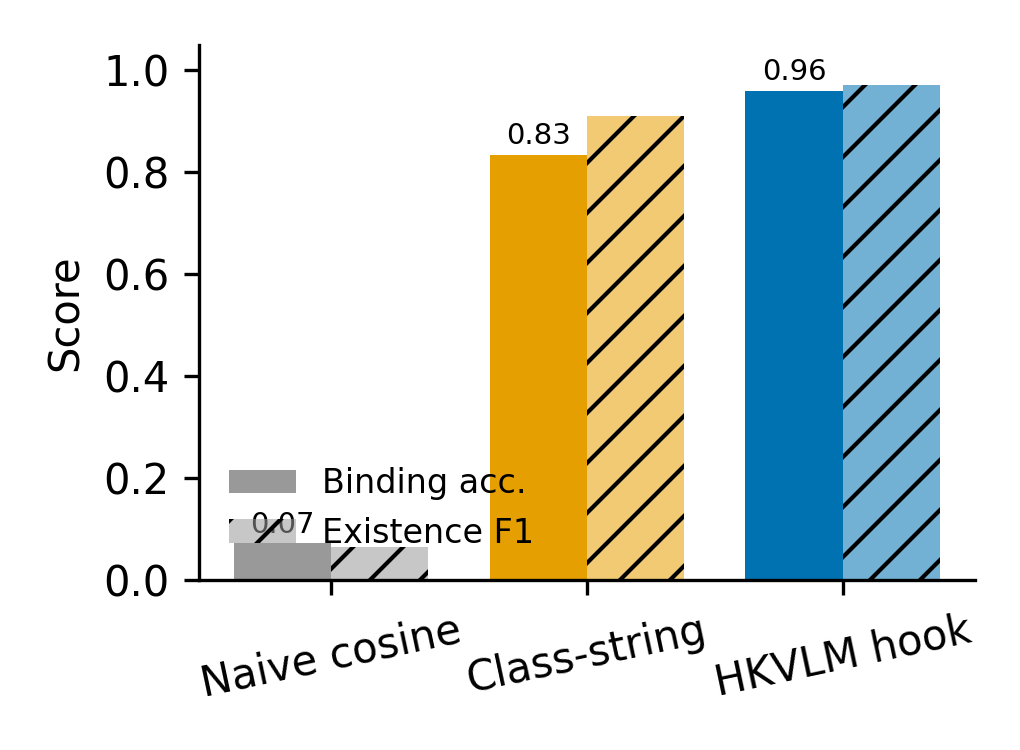}\hfill
\includegraphics[width=0.46\textwidth]{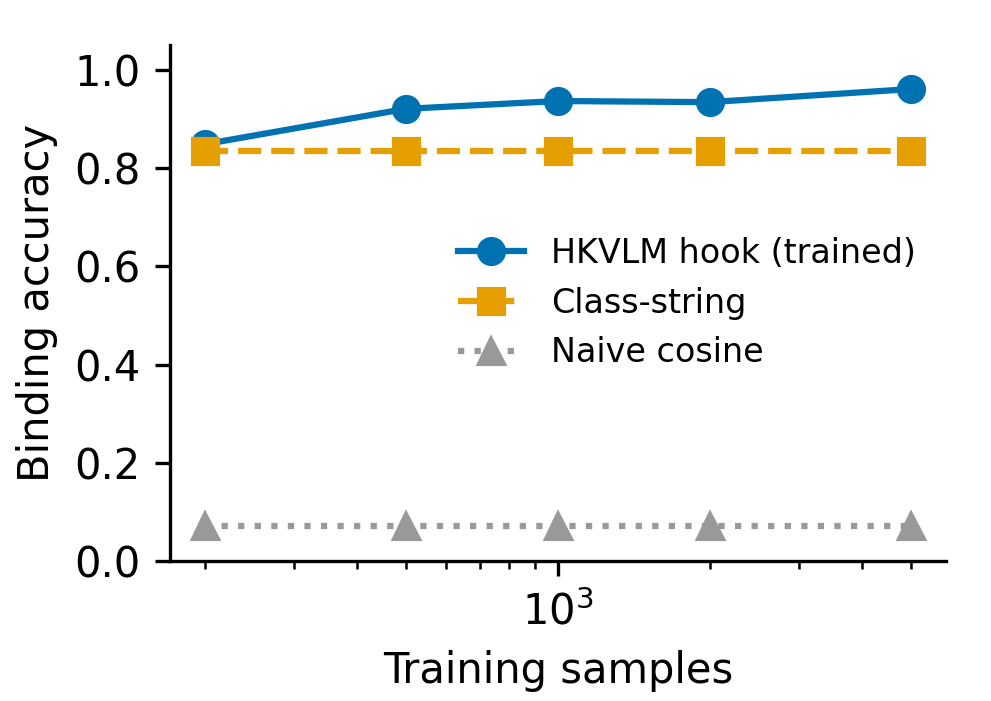}
\caption{Controlled binding study. Left: binding accuracy and existence F1 by
method. Right: data efficiency in the controlled setting.}
\label{fig:app-poc}
\end{figure*}

The controlled study supports two mechanism claims. First, a trained alignment
bridge is required to close the modality gap (HKVLM 0.960 vs naive 0.071
binding accuracy). Second, under clean synthetic negatives the faithfulness
verifier reaches high existence F1; in real data, the verifier's contribution is
larger under harder negatives (see POPE adversarial in the main paper).
Note that the class-string interface scores far higher here ($0.835$) than its
real-image counterpart: synthetic referents are separable by class identity
alone, whereas real referring expressions are instance-level---multiple
same-class distractors share the class string, which is exactly the information
the discrete interface discards.

\section{Extended Results}
This section collects additional controls, stress tests, and ablations that
complement the main-paper tables.

\paragraph{Main grounding table interpretation.}
In the main paper's primary grounding table, the naive column is untrained
cosine with forced emission: it always emits the argmax proposal, with no veto,
so it measures the raw modality gap rather than a threshold artifact. Under the
operating threshold $(\gamma_v{=}0.3)$ this untrained variant instead abstains
on every query (0.000 grounding), which only shows that something beats nothing
rather than measuring binding quality. The no-veto column is the full hook with
faithfulness and veto removed. HKVLM is the full model. The detector-native
column uses the detector with the expression as the text prompt. Coordinate-snap
uses frozen Qwen2.5-VL box prediction and IoU-matches that predicted box to the
nearest proposal. Binding accuracy is IoU$\ge$0.5 among emissions. Coordinate-snap
and no-veto both exceed HKVLM in raw grounding accuracy on the positive-only
RefCOCO-family splits because abstention is pure loss when every query has a
referent, and coordinate-snap is not evaluated on absent-referent queries. We
therefore interpret HKVLM as a faithfulness, data-efficiency, and diagnostic
mechanism study rather than as a grounding-accuracy leader.

\paragraph{Interface-causality ablation: protocol and phrase examples.}
Both interface variants (continuous and discrete) reuse the identical frozen
Grounding DINO detector ($M{=}300$), the identical RefCOCOg training pool
($24$k expressions, negative-sampling rate $0.3$), the identical fixed
thresholds $(\gamma_v,\gamma_s){=}(0.3,24.0)$, and seed $42$; only the
query-embedding source differs. For the discrete variant, the frozen VLM is
prompted with the image and the instruction \emph{``In at most 6 words, name
the object referred to by: `\{expression\}'. Respond with only the short
phrase,''} and the resulting phrase (truncated to 6 words) is re-embedded
through the same last-token pooling used for the continuous query. Representative
generations: \emph{``a woman with sunglasses on her head on the television
being interviewed''}~$\to$~\emph{``Not possible from image.''}; \emph{``a
long-horn, long-haired brown cow looking at the camera''}~$\to$~\emph{``cow
not present''} (both absent-referent training rows); \emph{``a bird that is
close to the baby in a pink shirt''}~$\to$~\emph{``cockatiel''} (present
referent, correctly disambiguated to species level). The verbalization step
evidently performs part of the existence judgment itself for negatives, and
compresses present referents to category-like phrases closer to the
detector's own grounded vocabulary than a full compositional sentence ---
consistent with the discrete interface's higher grounding accuracy reported in
the main-paper discussion, despite the extra generation pass and the loss of
relational detail a raw expression carries.

This result is the opposite of the initial continuous-interface hypothesis. For
absent referents, the verbalization step often states non-existence before the
hook sees the query, effectively pre-solving part of the faithfulness judgment.
For present referents, short category-like phrases are closer to the vocabulary
the detector's region features were grounded on than a full compositional
sentence, which can ease alignment for a low-capacity hook at the cost of
relational detail. We therefore interpret interface discretization as a real
design axis rather than a settled disadvantage.

\paragraph{Risk--coverage curves.}
We sweep the emission gate $(\gamma_v,\gamma_s)$ over the full score grid
(rather than the single fixed operating point used elsewhere) and record
coverage (fraction of queries emitted) against risk (error rate among emitted
queries) for HKVLM and the no-faith ablation, on RefCOCOg val and the three
POPE splits. Table~\ref{tab:app-aurc} reports the area under this curve
(AURC) together with the coverage/risk pair at our fixed operating point.
Lower AURC is better. The veto's benefit is consistent across the entire
threshold range on POPE (roughly half the AURC of no-faith on every split),
not an artifact of where we happened to fix $(\gamma_v,\gamma_s)$; on
RefCOCOg val the ordering reverses because abstention is pure loss when every
query has a referent, the identity formalized in Corollary~1.

\begin{table}[htbp]
\centering
\footnotesize
\setlength{\tabcolsep}{4pt}
\caption{Risk--coverage AURC (lower is better) and the fixed operating point
$(\gamma_v,\gamma_s){=}(0.3,24.0)$'s own coverage/risk.}
\label{tab:app-aurc}
\begin{tabular}{@{}lccc@{}}
\toprule
Split & Variant & AURC & Op.\ point (cov./risk)\\
\midrule
RefCOCOg val & HKVLM & 0.590 & 0.861 / 0.570\\
RefCOCOg val & No-faith & 0.515 & 0.965 / 0.591\\
\midrule
POPE random & HKVLM & 0.260 & 0.228 / 0.134\\
POPE random & No-faith & 0.456 & 0.352 / 0.467\\
\midrule
POPE popular & HKVLM & 0.262 & 0.228 / 0.123\\
POPE popular & No-faith & 0.426 & 0.352 / 0.435\\
\midrule
POPE adversarial & HKVLM & 0.278 & 0.228 / 0.170\\
POPE adversarial & No-faith & 0.421 & 0.352 / 0.440\\
\bottomrule
\end{tabular}
\end{table}

\paragraph{Trained matched-perception baselines.}
The two stronger trained controls under the same frozen Grounding DINO proposals
($M{=}300$)---a learned CLIP-to-region bridge and a trained AR-index proposal
selector---are reported in full (all four splits, three-seed mean$\pm$std) in the
main paper's trained-controls table. Both improve substantially over
untrained controls, but both remain below HKVLM on grounding and binding across
in-domain and cross-dataset-transfer splits.

\paragraph{Compact responsible-AI stress test.}
Table~\ref{tab:app-rai-stress} reports a compact stress set that treats sensitive,
social-role, and ambiguous prompts as visually unverifiable absent queries. The
result highlights a limitation: hallucination remains high on sensitive-attribute
and ambiguous-state prompts.

\begin{table}[htbp]
\centering
\footnotesize
\setlength{\tabcolsep}{5pt}
\caption{Compact responsible-AI stress test (600 prompts).}
\label{tab:app-rai-stress}
\begin{tabular}{@{}lccc@{}}
\toprule
Group & $n$ & POPE-style acc. & Hallucination rate\\
\midrule
Overall & 600 & 0.5517 & 0.4483\\
Sensitive attribute & 250 & 0.4240 & 0.5760\\
Social role & 200 & 0.7800 & 0.2200\\
Ambiguous state & 150 & 0.4600 & 0.5400\\
\bottomrule
\end{tabular}
\end{table}

\paragraph{Reasoning-stress construction and rejection protocol.}
From COCO val2017 instance annotations we semi-automatically generated $1{,}323$
candidate queries across three categories: \emph{negation} (an image with
$\ge\!2$ people where exactly one has no spatial overlap with any instance of
a named accessory category), \emph{affordance} (an image containing exactly
one instance of an affordance-mapped category and no sibling category in the
same affordance group), and \emph{world-knowledge state} (a state-dependent
selection derivable from annotation geometry, e.g.\ an unoccupied chair among
images that also contain an occupied one). Each candidate records the image,
query, category, construction rule, ground-truth box, and distractor count.
Rule-based filtering reduced this to $323$ candidates, which were manually
reviewed against three criteria --- the ground-truth box correctly
matches the query, the query has a unique correct referent in the image, and
the query reads as natural, unambiguous language --- rejecting $92$ and
keeping $231$ approved queries, drawn from a pre-rejection pool of $323$
candidates spanning negation ($119$), affordance ($140$), and world-knowledge
state ($64$); exact per-category counts after rejection are released with the
query set and rejection log rather than approximated here. All $231$ approved
queries are
evaluated under the same fixed thresholds and $M{=}300$ proposal budget as
every other split in this paper; results are reported in the main paper's
results section.

\paragraph{CHAIR on free-form generated descriptions.}
Table~\ref{tab:app-chair} complements POPE with free-form caption hallucination \citep{rohrbach2018object}.
On the 2,000-image CHAIR evaluation (up from an original 200-image
pilot, whose small object-mention count could not support a percentage claim),
CHAIR$_s$ (fraction of captions containing a hallucinated object) is $8.8\%$
and CHAIR$_i$ (fraction of object mentions that are hallucinated) is $7.6\%$.

\begin{table}[htbp]
\centering
\footnotesize
\setlength{\tabcolsep}{5pt}
\caption{CHAIR evaluation on free-form image descriptions (COCO val, 2,000 images).}
\label{tab:app-chair}
\begin{tabular}{@{}lc@{}}
\toprule
Metric & Value\\
\midrule
CHAIR$_s$ & 0.0880\\
CHAIR$_i$ & 0.0761\\
\# captions & 2000\\
\# object mentions & 2470\\
\# hallucinated mentions & 188\\
\bottomrule
\end{tabular}
\end{table}

\paragraph{Negative-type verifier ablation.}
Table~\ref{tab:app-negtype} shows that negative construction strongly affects
faithfulness behavior: no-negatives fails, random-swap is strong on random/popular,
and mixed negatives are strongest on adversarial POPE.

\begin{table}[htbp]
\centering
\footnotesize
\setlength{\tabcolsep}{3pt}
\caption{Negative-type ablation for faithfulness verifier training.}
\label{tab:app-negtype}
\resizebox{\columnwidth}{!}{%
\begin{tabular}{@{}lcccccc@{}}
\toprule
Variant & POPE-random & POPE-popular & POPE-adversarial & Halluc. random & Halluc. popular & Halluc. adversarial\\
\midrule
random\_swap & \textbf{0.8740} & \textbf{0.7993} & 0.7307 & 0.135 & 0.285 & 0.422\\
absent\_cat & 0.5170 & 0.5173 & 0.5090 & 0.011 & 0.011 & 0.027\\
hard\_relational & 0.6013 & 0.6013 & 0.5960 & 0.003 & 0.003 & 0.014\\
mixed & 0.8030 & 0.7860 & \textbf{0.7520} & 0.016 & 0.050 & 0.118\\
no\_neg & 0.4487 & 0.4437 & 0.4477 & 0.695 & 0.705 & 0.697\\
\bottomrule
\end{tabular}
}
\end{table}

\paragraph{Backbone-swap ablation (DINO crop features).}
Table~\ref{tab:app-backbone-swap} keeps proposal boxes fixed but replaces
language-aligned region features with pure-visual DINO crop features. Grounding
and binding both degrade substantially, supporting the role of language-aligned
region representations.

\begin{table}[htbp]
\centering
\footnotesize
\setlength{\tabcolsep}{4pt}
\caption{Backbone-swap ablation (DINO crop features; fixed proposal boxes).}
\label{tab:app-backbone-swap}
\resizebox{\columnwidth}{!}{%
\begin{tabular}{@{}lcccc@{}}
\toprule
Split & Grounding & Binding & POPE acc. & Faith F1\\
\midrule
RefCOCOg val (DINO) & 0.1054 & 0.0608 & 0.9000 & 0.1043\\
RefCOCOg test (DINO) & 0.0917 & 0.0508 & 0.9142 & 0.0915\\
\bottomrule
\end{tabular}
}
\end{table}

\paragraph{Experiment matrix (claim-to-evidence map).}
Table~\ref{tab:app-exp-matrix} summarizes claim coverage for reviewer audit.

\begin{center}
\scriptsize
\setlength{\tabcolsep}{3pt}
\captionof{table}{Experiment matrix (claim, protocol, controls, and key outputs).}
\label{tab:app-exp-matrix}
\begin{tabular}{@{}p{0.34\columnwidth}p{0.61\columnwidth}@{}}
\toprule
Claim & Evidence protocol and key outputs\\
\midrule
H1: explicit binding improves grounding & RefCOCO/+/g at matched frozen perception ($M{=}300$), with naive/class-string/phrase-extraction/coordinate-snap/learned-bridge/AR-index controls; outputs: grounding accuracy, binding accuracy, SayErr.\\
H2: veto improves faithfulness & POPE random/popular/adversarial with no-veto and negative-type controls (plus $\lambda_f$ variants); outputs: POPE accuracy, hallucination rate, faith F1.\\
H3: data efficiency under freezing & RefCOCOg train-size sweep ($200\rightarrow24$k) against naive reference under fixed frozen detector + VLM; outputs: grounding-vs-data and binding-vs-data curves.\\
Proposal ceiling explains residual error & $M{=}300$ proposal-coverage audit with Recall@0.5; outputs: recall ceiling, SeeErr/SayErr interpretation, and residual binding error.\\
Generative and safety stress behavior & COCO free-form CHAIR and compact sensitive/social/ambiguous RAI stress under fixed thresholds; outputs: CHAIR$_s$/CHAIR$_i$, group-wise POPE and hallucination rates.\\
Language-aligned region representation matters & Backbone-swap ablation with fixed proposal boxes and DINO crop features; outputs: grounding, binding, and faith F1 degradation.\\
\bottomrule
\end{tabular}
\end{center}

\FloatBarrier

\paragraph{Remaining future extension.}
Additional equal-budget LoRA studies at larger data scales and with alternative
coordinate-output formats remain useful future work.

\section{Real-Image Setup and Full Results}
\paragraph{Configuration.}
Perception is a frozen Grounding DINO (base) producing class-agnostic top-$M$
proposals with $256$-dimensional region features; queries are read from a frozen
Qwen2.5-VL-7B as one $3584$-dimensional embedding per referring expression. Only
the alignment hook (two projections, box residual, verifier) is trained on
$24$k RefCOCOg training expressions; the final fixed thresholds
$(\gamma_v,\gamma_s)=(0.3,24.0)$ are selected once on held-out data and then
held fixed across the reported grounding splits. Evaluation covers
RefCOCO val ($10{,}834$ expressions), RefCOCO+ val ($10{,}758$), RefCOCOg
val ($4{,}896$) and test ($9{,}602$), and POPE random/popular/adversarial
($3{,}000$ each).

\paragraph{Per-split faithfulness on referring grounding.}
Table~\ref{tab:appref} extends the main grounding table ($M{=}300$) with
positive-query emission diagnostics. Because these four RefCOCO splits contain
present referring expressions only, false-positive hallucination is undefined
there; hallucination is measured on POPE in the main paper. The veto reduces
positive-query grounding by $0.02$--$0.05$ relative to no-veto while providing
the POPE faithfulness gains reported in the main paper.

\begin{table}[htbp]
\centering
\scriptsize
\setlength{\tabcolsep}{3pt}
\caption{Referring-grounding diagnostics on positive-query splits ($M{=}300$,
frozen perception, fixed thresholds $(0.3,24.0)$). ``Bind.'' is accuracy among
emitted boxes; hallucination is not defined on these positive-only splits and is
reported on POPE in the main paper.}
\label{tab:appref}
\begin{tabular}{@{}l l c c c@{}}
\toprule
Split & Method & Grounding & Bind. & Emit score\\
\midrule
\multirow{2}{*}{RefCOCO val}
 & No-veto & 0.452 & 0.465 & 0.972\\
 & HKVLM   & \textbf{0.399} & 0.488 & 0.818\\
\midrule
\multirow{2}{*}{RefCOCO+ val}
 & No-veto & 0.403 & 0.424 & 0.952\\
 & HKVLM   & \textbf{0.356} & 0.465 & 0.766\\
\midrule
\multirow{2}{*}{RefCOCOg val}
 & No-veto & 0.398 & 0.412 & 0.967\\
 & HKVLM   & \textbf{0.367} & 0.410 & 0.896\\
\midrule
\multirow{2}{*}{RefCOCOg test}
 & No-veto & 0.403 & 0.417 & 0.967\\
 & HKVLM   & \textbf{0.380} & 0.425 & 0.896\\
\bottomrule
\end{tabular}
\end{table}

\paragraph{Data efficiency (RefCOCOg val, $M{=}300$).}
Grounding accuracy increases from the low-data regime to $0.367$ at the full
$24$k training budget, while emitted-box binding reaches $0.410$. The trend in
the main paper's data-efficiency figure shows that even a few hundred expressions already
outperform the abstaining naive-cosine reference; the full budget further
improves both emission and binding quality.

\section{Training and Inference Algorithm}
\label{sec:app-algorithm}

Algorithm~\ref{alg:hkvlm} gives the full training and inference procedure. For
clarity, $v_\psi$ denotes the verifier head parameterized by $\psi$, and
$\Delta b$ is the box-residual prediction.

\begin{algorithm}[htbp]
\caption{HKVLM: training and inference}
\label{alg:hkvlm}
\small
\begin{algorithmic}[1]
\Require image $I$, query text $q$, frozen detector $D$, frozen LM $F$,
hook parameters $\theta,\phi,\psi$, proposal budget $M$
\State $\{(b_j,e_j)\}_{j=1}^{M} \leftarrow D(I)$ \Comment{class-agnostic proposals}
\State $\{u_k\}_{k=1}^{K} \leftarrow F(I,q)$ \Comment{referential query embeddings}
\State $\tilde{e}_j \leftarrow f_\theta(e_j)$,\enspace $\tilde{u}_k \leftarrow g_\phi(u_k)$ \Comment{shared-space projections}
\State $s_{kj} \leftarrow \langle\tilde{u}_k,\tilde{e}_j\rangle/\kappa$ \Comment{affinity matrix}
\If{training}
    \State Build positive sets $P(k)$ via IoU overlap with ground-truth boxes
    \State Compute assignment $\sigma^\star$ by Hungarian matching on cost~(1)
    \State Compute $\mathcal{L}_{\mathrm{ret}}$ (contrastive), $\mathcal{L}_{\mathrm{box}}$ (residual), $\mathcal{L}_{\mathrm{faith}}$ (verifier)
    \State Sample absent-category and hard-relational negatives per positive
    \State Update $\theta,\phi,\psi$ only (perception and LM frozen)
\Else
    \For{each query $k = 1,\ldots,K$}
        \State $j^\star \leftarrow \arg\max_j\; s_{kj}$ \quad ($= \sigma^\star(k)$ when $K{=}1$)
        \State $\hat{y}_k \leftarrow \bigl[v_\psi(\tilde{u}_k,\tilde{e}_{j^\star})\ge\gamma_v
               \;\wedge\; s_{k j^\star}\ge\gamma_s\bigr]$
        \If{$\hat{y}_k=1$}
            \State emit box $\hat{b}_k \leftarrow b_{j^\star}+\Delta b_{j^\star}$
        \Else\enspace abstain
        \EndIf
    \EndFor
\EndIf
\end{algorithmic}
\end{algorithm}

\section{Proposal Coverage and Residual Binding Error at $M{=}300$}
\label{sec:app-m-ablation}

Table~\ref{tab:m-ablation} reports the proposal ceiling used by the
main paper. The $M{=}300$ proposal pool covers $99.8$--$100\%$ of ground-truth
referents at IoU $0.5$. The residual grounding error is therefore almost
entirely binding error rather than proposal-coverage error.

\begin{table*}[htbp]
\centering
\footnotesize
\setlength{\tabcolsep}{4pt}
\caption{Proposal coverage and residual binding error at $M{=}300$.
\textbf{Recall@0.5} is the detector proposal ceiling; SeeErr is its complement.
POPE rows report the final faith$\times$1 verifier setting from the main paper.}
\label{tab:m-ablation}
\begin{tabular}{@{}lcccc@{}}
\toprule
Split / Task & Grounding & Bind. & Recall@0.5 & SeeErr \\
\midrule
RefCOCO val   & 0.399 & 0.488 & 0.998 & 0.002\\
RefCOCO+ val  & 0.356 & 0.465 & 0.999 & 0.001\\
RefCOCOg val  & 0.367 & 0.410 & 1.000 & 0.000\\
RefCOCOg test & 0.380 & 0.425 & 0.998 & 0.002\\
\midrule
\multicolumn{5}{l}{\textit{POPE accuracy / hallucination (random / popular / adversarial)}} \\
POPE acc.    & \multicolumn{4}{c}{0.874 / 0.799 / 0.731} \\
Halluc. rate & \multicolumn{4}{c}{0.135 / 0.285 / 0.422} \\
\bottomrule
\end{tabular}
\end{table*}

\paragraph{Complexity and latency.}
Let $d_r$ and $d_q$ denote the detector-region and language-query dimensions,
and $d_h$ the hook hidden dimension. Projecting $M$ regions and $K$ queries
costs $\mathcal{O}(Md_rd_h + Kd_qd_h)$; the affinity matrix costs
$\mathcal{O}(MKd_h)$. For the single-query case ($K=1$) used in our main
experiments, bipartite matching degenerates to top-1 selection and the dominant
cost is the affinity computation over $M$ proposals, linear in $M$. Region-feature
caching at fixed $M$ amortizes the projection cost over multiple queries on the
same image.

Wall-clock hook latency measured on an NVIDIA H800 (200 timed calls after
50 warm-up calls, cache preloaded, $d_r{=}256$, $d_q{=}3584$, $d_h{=}128$):
$p_{50}=0.619$\,ms ($M{=}50$), $0.614$\,ms ($M{=}100$), $0.501$\,ms ($M{=}200$),
$0.495$\,ms ($M{=}300$). At these sub-millisecond scales the measurement is
dominated by a kernel-launch and timing floor rather than by the $O(M)$
arithmetic, which is why the median does not increase (and can even decrease,
as larger batched kernels amortize launch overhead better) with $M$; the
practical conclusion is that hook latency is negligible at every tested budget,
not that it violates the linear complexity analysis.

\section{Faithfulness Verifier Ablation}
\label{sec:app-verifier-ablation}

The faithfulness-weight ($\lambda_f$) ablation table (POPE accuracy for
$\lambda_f\!\in\!\{0,0.5,1,2\}$) is reported in
Table~\ref{tab:app-verifier-main}. All four variants share the identical
RefCOCOg-train cache (30\% random-image-swap negatives), hook architecture, and
proposal budget ($M{=}300$); thresholds $(\gamma_v,\gamma_s)$ are tuned per
variant on RefCOCOg val, so the absolute values exceed the fixed-threshold
protocol used for the main results.

\begin{table}[htbp]
\centering
\footnotesize
\setlength{\tabcolsep}{5pt}
\caption{Faithfulness-verifier strength ablation: POPE accuracy vs.\ loss weight
$\lambda_f$ (identical RefCOCOg-train cache, hook, and $M{=}300$; thresholds
tuned per variant on held-out RefCOCOg val, so absolute values exceed the
fixed-threshold main results). Any $\lambda_f{>}0$ is far above the verifier-off
setting; $\lambda_f{=}1$ balances the three splits.}
\label{tab:app-verifier-main}
\begin{tabular}{@{}lcccc@{}}
\toprule
Variant & $\lambda_f$ & Random & Popular & Adversarial\\
\midrule
Verifier off & 0.0 & 0.608 & 0.568 & 0.566\\
Faith $\times$0.5 & 0.5 & \textbf{0.893} & 0.806 & 0.733\\
Faith $\times$1 (ours) & 1.0 & 0.874 & 0.799 & 0.731\\
Faith $\times$2 & 2.0 & 0.840 & \textbf{0.807} & \textbf{0.769}\\
\bottomrule
\end{tabular}
\end{table}

Without the faithfulness term ($\lambda_f{=}0$), POPE accuracy sits at
$0.57$--$0.61$, above random ($0.5$) because grounding-driven proposal selection
weakly filters non-existent referents but provides no explicit exist\-ence
modeling. Adding any $\lambda_f{>}0$ yields a large gain ($+$28--32 pp on
random POPE) by training the verifier to distinguish present from absent queries.
The gains are stable across $\lambda_f\!\in\!\{0.5, 1.0\}$; doubling to
$\lambda_f{=}2.0$ improves adversarial POPE ($0.769$ vs $0.731$) at a
small cost on random ($0.840$ vs $0.874$), reflecting that heavier
faithfulness pressure makes the veto more conservative.
The default $\lambda_f{=}1.0$ balances all three splits; the ablation
confirms the verifier is robust to moderate variation in loss weighting.

Note: these per-variant thresholds are tuned on the val split, yielding higher
POPE than the fixed-threshold protocol used for the main POPE results
($\gamma_v{=}0.3$, $\gamma_s{=}24.0$, seed 42). The relative ordering
across variants is what matters for this ablation.

\section{Additional Discussion Details}
\label{sec:app-additional-discussion}

\paragraph{Scope of comparison and LoRA.}
Our primary evidence is confined to ablation references sharing frozen
perception, which controls for encoder choice. The main-paper class-string,
phrase-extraction, and coordinate-snap controls are informative for
interface-level failure modes, but they are not fully competitive trained
alternatives and should be interpreted as such. Low-rank VLM tuning
(\emph{LoRA} $r{=}16$ on Qwen2.5-VL-7B attention projections, effective
batch~64, coordinate-token output) trained on the full RefCOCOg training split
($80{,}512$ expressions) achieves grounding accuracy of $0.79$ on a
$200$-example RefCOCOg-val subset (3 epochs) and $0.861$ on the full RefCOCOg
val split ($n{=}4{,}896$; 5 epochs). Note that this exceeds the $24$k-expression
budget used to train the HKVLM hook by $3.4\times$. The gap
between HKVLM ($0.367$ on RefCOCOg val) and LoRA reflects two compounding
constraints: (i)~LoRA generates bounding-box tokens freely without a discrete
proposal-retrieval constraint; and (ii)~the two systems are architecturally
incommensurable---LoRA fine-tunes the backbone end-to-end, whereas HKVLM keeps
both detector and VLM fully frozen and trains only the lightweight alignment
hook, trading peak accuracy for data efficiency and the ability to add a
perception-grounded faithfulness veto.

\paragraph{Single-query and seed coverage.}
For fine-grained or multi-entity requests a single $K{=}1$ query embedding may
be insufficient to disambiguate among similar proposals. The main tables use the
fixed-threshold protocol with paired per-sample tests; multi-seed runs with
separately tuned thresholds are useful stability checks but are not directly
comparable to the fixed-threshold results, so we do not use them as confidence
intervals for the final tables.

\paragraph{Trained-control seed provenance and significance table.}
The trained-control table in the main paper uses the same three-seed
mean$\pm$std protocol for HKVLM and the matched-perception trained controls.
Seed 42 comes from the original evaluation pass, while seeds 123 and 456 come
from independent reruns. Per-sample hit and emission records are released with
the artifact so the small seed-42 cross-run drift relative to a single-seed spot
check ($\le\!0.01$--$0.02$ across splits) is auditable.

Table~\ref{tab:app-mcnemar} gives the discordant-pair details for the main
paper's McNemar summary.

\begin{table}[htbp]
\centering
\scriptsize
\setlength{\tabcolsep}{3pt}
\caption{Mid-$p$ McNemar tests on grounding (present queries only). Each cell:
$p$-value and discordant pairs $n_{+/-}$. HKVLM outperforms naive cosine and
detector-native; the veto trades $-$0.02--$-$0.05 grounding for faithfulness.}
\label{tab:app-mcnemar}
\resizebox{\columnwidth}{!}{%
\begin{tabular}{@{}lccc@{}}
\toprule
Split & HKVLM vs naive & HKVLM vs no-faith & HKVLM vs det-native\\
\midrule
RefCOCO val   & $<\!10^{-10}$ (4325/0) & $2{\times}10^{-31}$ (930/1503)  & $4{\times}10^{-109}$ (3057/1562)\\
RefCOCO+ val  & $<\!10^{-10}$ (3827/0) & $2{\times}10^{-27}$ (854/1364)  & $3{\times}10^{-98}$ (2747/1404)\\
RefCOCOg val  & $<\!10^{-10}$ (1799/0) & $2{\times}10^{-6}$ (429/579)   & $10^{-46}$ (1317/681)\\
RefCOCOg test & $<\!10^{-10}$ (3651/0) & $10^{-6}$ (888/1106)           & $6{\times}10^{-120}$ (2645/1215)\\
\bottomrule
\end{tabular}%
}
\end{table}

\paragraph{$K{>}1$ demonstration protocol and per-instance breakdown.}
We evaluate the RefCOCOg-trained hook, frozen and unmodified, on a gRefCOCO
\citep{grefcoco} val subset: 150 multi-target rows (each with 2--4 ground-truth
instances) and 30 no-target rows, drawn from COCO train2014 images. For each
query we take the top-$K$ proposals by hook score under the same fixed
thresholds $(\gamma_v,\gamma_s){=}(0.3,24.0)$ used everywhere else, then solve
a Hungarian assignment between emitted boxes and ground-truth instances at
$\mathrm{IoU}\!\ge\!0.5$. \emph{Set-level hit} requires every ground-truth
instance in a row to be matched by a distinct emitted box; \emph{per-instance
recall} credits each matched instance independently. Results: mean predicted
set size $2.14$ against a mean true set size $2.03$ (the hook does not
systematically over- or under-emit); per-instance recall $0.410$; set-level hit
rate $0.187$ (at least one correct partial match per row) and exact set-level
hit rate $0.033$ (every instance in the row correctly recovered); abstention
accuracy $0.80$ on the 30 no-target rows. No parameters are retrained or
retuned for this evaluation --- it demonstrates that the retrieval-plus-assignment
formulation degrades gracefully to partial credit under $K{>}1$ supervision
it never saw, rather than failing outright, while confirming (via the modest
exact-set-hit rate) that genuine multi-target training remains necessary for a
benchmark-strength claim.

\paragraph{Broader impact and ethics.}
Faithful open-vocabulary detection can be misused for surveillance, covert
person tracking, and population monitoring. We therefore recommend restricting
downstream deployment through access controls, usage auditing, and explicit
prohibition of high-risk uses. The system inherits biases from both the frozen
detector and the language model: world-knowledge queries involving social
roles, safety-related attributes, or cultural context may amplify stereotypical
correlations already present in COCO/RefCOCO-style training data. These
datasets do not uniformly represent all environments, occupations, body types,
or cultural backgrounds, so benchmark accuracy should not be interpreted as
evidence of universal reliability or fairness. For production use, we recommend
a model card documenting training dependencies, known failure modes, benchmark
limitations, and prohibited applications.

\paragraph{Benchmark noise.}
The RefCOCO family contains non-trivial annotation noise; recent
work~\citep{refl4} has documented systematic labeling issues and proposes the
Ref-L4 cleaned evaluation set. We evaluate HKVLM zero-shot on Ref-L4 validation
($n_{\text{present}}{=}9{,}388$, $n_{\text{total}}{=}13{,}420$): grounding
accuracy is $0.235$, compared to $0.367$ on the noisy RefCOCOg-val split. As an
additional reference point, the detector-native text-query baseline with the
same frozen Grounding DINO and expression-as-prompt protocol reaches $0.662$ on
Ref-L4 ($n{=}13{,}420$), markedly higher than its $0.231$--$0.238$ on the
RefCOCOg splits. This opposite movement suggests that Ref-L4's noise-audited
queries are, on average, closer to category-style phrasing that the detector's
own text interface handles well, while remaining harder for the hook's learned
binding. We report both numbers so readers can judge Ref-L4 headroom for each
interface style on its own terms.

\paragraph{Coordinate-snap correction and interpretation.}
Coordinate-snap is the one matched-perception control that HKVLM does not beat
on raw grounding accuracy. A normalized-coordinate prompting convention yields
near-chance results ($\le\!0.034$) for Qwen2.5-VL because it does not match the
model's documented coordinate format. Using the documented absolute-pixel JSON
format and fixing coordinate normalization raises coordinate-snap to
$0.840$--$0.896$, a plausible band given Qwen2.5-VL's reported referring
expression accuracy and above HKVLM's $0.356$--$0.399$ on every standard split.
This reversal is why the main paper frames HKVLM as a faithfulness,
data-efficiency, and diagnostic mechanism study rather than as a raw grounding
leader.

\section{Reproducibility Configuration}
\label{sec:app-repro}

Table~\ref{tab:repro_config} provides the full configuration used in the
real-image experiments.

\begin{center}
\footnotesize
\setlength{\tabcolsep}{3pt}
\captionof{table}{Reproducibility configuration sheet for HKVLM real-image experiments.}
\label{tab:repro_config}
\begin{tabular}{@{}p{0.34\columnwidth}p{0.61\columnwidth}@{}}
\toprule
\textbf{Item} & \textbf{Current value / notes} \\
\midrule
Frozen detector & Grounding DINO (base checkpoint), class-agnostic prompt with NMS. \\
Region feature dimension & 256 (post-NMS, pre-projection). \\
Language model & Qwen2.5-VL-7B, frozen; query from last hidden state of expression token. \\
Query feature dimension & 3584 (expression-token pooled). \\
Proposal budget $M$ & 300 in the final main protocol, see Table~\ref{tab:m-ablation}. \\
Veto threshold $\gamma_v$ & 0.3 in the final main protocol; tuned on RefCOCOg val held-out split. \\
Affinity threshold $\gamma_s$ & 24.0 in the final main protocol; tuned jointly with $\gamma_v$. \\
Training split & RefCOCOg train ($24$k expressions); evaluated on RefCOCO/+/g and POPE. \\
IoU threshold $\tau$ & 0.5 for grounding accuracy and positive-set definition. \\
Hook architecture & Two linear projections ($d{\to}128$) + verifier MLP; ReLU, no dropout. \\
Temperature $\kappa$ & 0.07 (InfoNCE-style retrieval). \\
Loss weights & $\lambda_b{=}1.0,\,\lambda_f{=}1.0$ (box residual + faithfulness). \\
Optimizer & AdamW, lr $1{\times}10^{-3}$, weight decay $10^{-2}$. \\
Batch size / epochs & 32 / 50; early stopping on validation grounding accuracy. \\
Hardware / software stack & NVIDIA H800 (80 GB), CUDA 12.x, PyTorch $\ge$2.1, NumPy $\ge$1.24, SciPy $\ge$1.10, scikit-learn $\ge$1.3, Matplotlib $\ge$3.7, PyYAML $\ge$6.0; the anonymous artifact includes an environment lock with exact Python, PyTorch, CUDA, Transformers, OpenCV, detector, and VLM checkpoint revisions. \\
Random seeds & Seed 42 for the final fixed-threshold tables; auxiliary seeds 123 and 456 used for robustness diagnostics. \\
Hook inference latency & $p_{50}{\le}0.62$\,ms and mean${\le}1.4$\,ms for $M\in\{50,100,200,300\}$; 200 timed + 50 warm-up calls. \\
Anonymous artifact contents & Training, evaluation, preprocessing, threshold-selection, statistical-test, and plotting code; per-table configuration files; random seeds; reasoning-stress query list and rejection log; per-sample prediction records for each reported split. \\
\bottomrule
\end{tabular}
\end{center}

\section{Reproducibility Checklist Notes}
All perception and language components are publicly available
\citep{groundingdino,yoloworld,dinov2,dinov3,llava15,qwen2vl}. The trainable
parts (two projections, box residual, verifier) are fully specified in
Tables~\ref{tab:repro_config} and the Formal Binding Objective section above.
Real-image results use a frozen Grounding DINO (base) and a frozen
Qwen2.5-VL-7B. The anonymous supplementary package contains the training,
evaluation, preprocessing, threshold-selection, statistical-test, and plotting
code; exact configs; fixed seeds; the reasoning-stress query list and rejection
log; per-sample prediction records; and an environment lock. All final
hyperparameters are reported in Table~\ref{tab:repro_config}.
No human-subjects data is used. The AAAI reproducibility checklist is provided
as a separate standalone document (ReproducibilityChecklist.pdf) rather
than embedded in this appendix.

\bibliography{references}